\documentclass{article}

\PassOptionsToPackage{numbers, compress}{natbib}


\usepackage[preprint]{neurips_2024}



\usepackage[utf8]{inputenc} 
\usepackage[T1]{fontenc}    
\usepackage{hyperref}       
\usepackage{url}            
\usepackage{booktabs}       
\usepackage{amsfonts}       
\usepackage{nicefrac}       
\usepackage{microtype}      
\usepackage{xcolor}         

\usepackage{amsmath}
\usepackage{amssymb}
\usepackage{amsthm}
\usepackage{algorithm}
\usepackage{algorithmic}
\usepackage{multirow}
\usepackage{graphicx}
\usepackage{caption}

\def \e {\epsilon}
\def \x {\mathbf{x}}
\def \OO {\mathcal{O}}
\newtheorem{thm}{Theorem}
\newtheorem{prop}{Proposition}
\newtheorem{cor}[thm]{Corollary}

\title{SimInversion: A Simple Framework for Inversion-Based Text-to-Image Editing}

%

\author{%
  Qi Qian$^1$\thanks{Corresponding author}\qquad Haiyang Xu$^2$\qquad Ming Yan$^2$\qquad Juhua Hu$^3$ \\
  $^1$Alibaba Group, Bellevue, WA 98004, USA\\
  $^2$Alibaba Group, Hangzhou, China\\
  $^3$School of Engineering and Technology, \\University of Washington, Tacoma, WA 98402, USA\\
  \texttt{\{qi.qian, shuofeng.xhy, ym119608\}@alibaba-inc.com, juhuah@uw.edu} 
}

\begin{document}

\maketitle

\begin{abstract}
  Diffusion models demonstrate impressive image generation performance with text guidance. Inspired by the learning process of diffusion, existing images can be edited according to text by DDIM inversion. However, the vanilla DDIM inversion is not optimized for classifier-free guidance and the accumulated error will result in the undesired performance. While many algorithms are developed to improve the framework of DDIM inversion for editing, in this work, we investigate the approximation error in DDIM inversion and propose to disentangle the guidance scale for the source and target branches to reduce the error while keeping the original framework. Moreover, a better guidance scale (i.e., 0.5) than default settings can be derived theoretically. Experiments on PIE-Bench show that our proposal can improve the performance of DDIM inversion dramatically without sacrificing efficiency.
\end{abstract}

\section{Introduction}
\label{sec:intro}
Diffusion models witness the tremendous success of image generation~\cite{ddpm,stable}. To obtain a synthetic image, those methods first sample a random noise from the Gaussian distribution and then a learned denoising network will refine the sample iteratively to recover a high-quality image from the noise. Moreover, text information can be included as conditions for denoising to enable text-guided image generation~\cite{cfree,stable}.

Due to the impressive performance of image generation, text-guided image editing has attracted much attention recently~\cite{ip2p,textual,p2p}. Unlike the guided generation task that only with the target text condition, image editing aims to obtain a target image from a source image with the target condition. Intuitively, image editing can reuse the generation procedure. However, the knowledge from the source image, e.g., structure, background, etc. should be kept in the target image when editing. Considering that the image is generated from the random noise in diffusion, the noise corresponding to the source image can be applied to obtain the target image for knowledge preservation~\cite{p2p}.

However, DDPM sampling~\cite{ddpm} in diffusion is a stochastic process and it is hard to obtain the original noise that generates the source image. To tackle the challenge, a deterministic DDIM sampling is developed to reserve the generation process~\cite{ddim}. It shares the same training objective as DDPM and can infer the initial noise for images with the model pre-trained by DDPM. With the source noise obtained by DDIM inversion, many image editing methods are developed to leverage the generation process in diffusion.

Among different methods, the dual branch framework is prevalent due to its simple yet effective architecture~\cite{masa,p2p,directinv,styled,np,nt}. Concretely, given the initial noise from the source image, the framework consists of denoising processes for the source image and target image, respectively. At each step, the latent state of different branches will be updated with the corresponding conditions and the information from the source branch, e.g., attention map~\cite{p2p}, can be incorporated into the target branch to preserve the content from the source image. Fig.~\ref{fig:illu} (a) illustrates the procedure of DDIM inversion for image editing~\cite{p2p}.

\begin{figure}[t]
\begin{minipage}{0.46\textwidth}
\centering
\includegraphics[height = 1.4in]{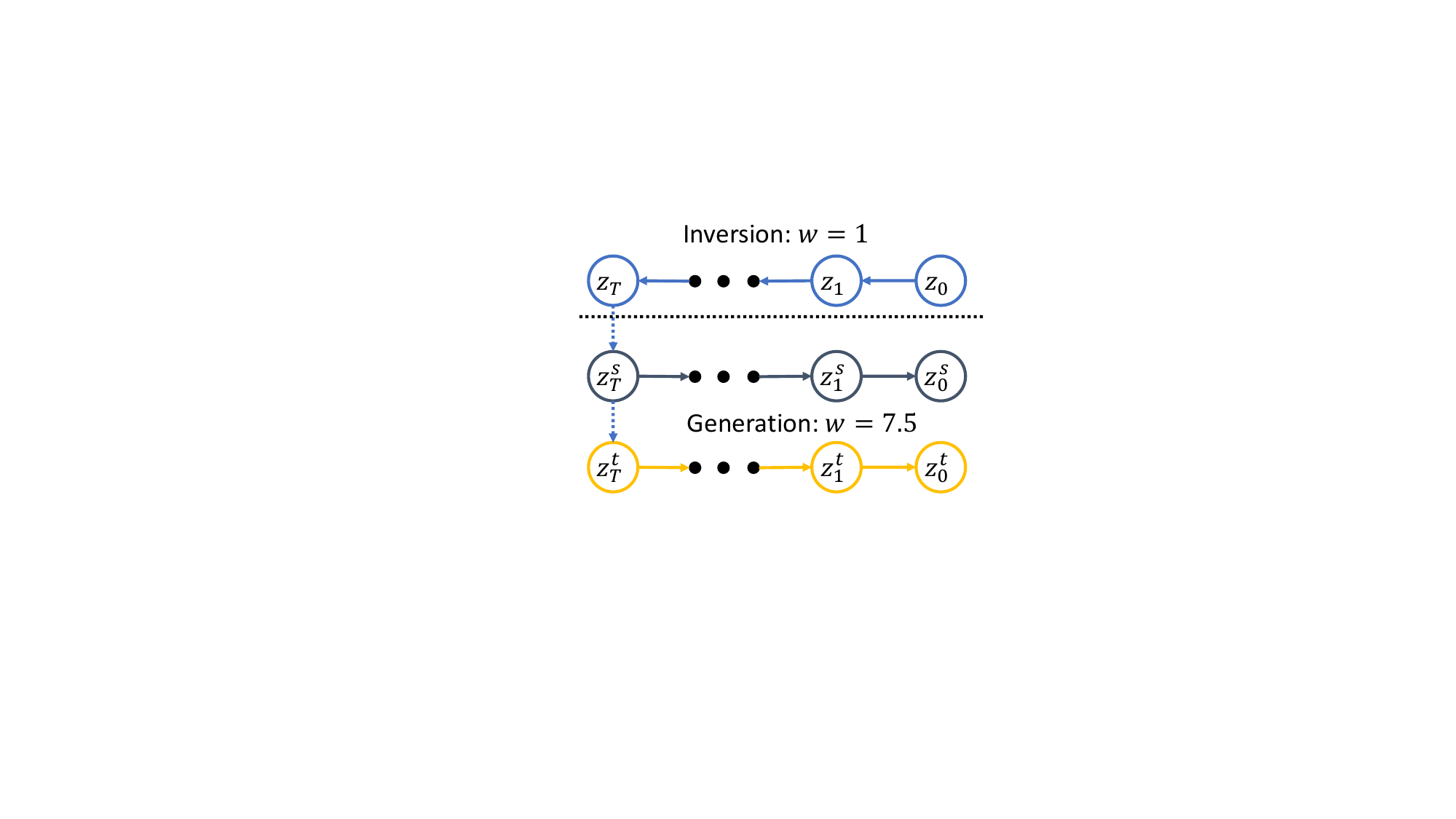}
\mbox{\footnotesize (a) DDIM inversion for editing~\cite{p2p}}
\end{minipage}
\begin{minipage}{0.46\textwidth}
\centering
\includegraphics[height = 1.4in]{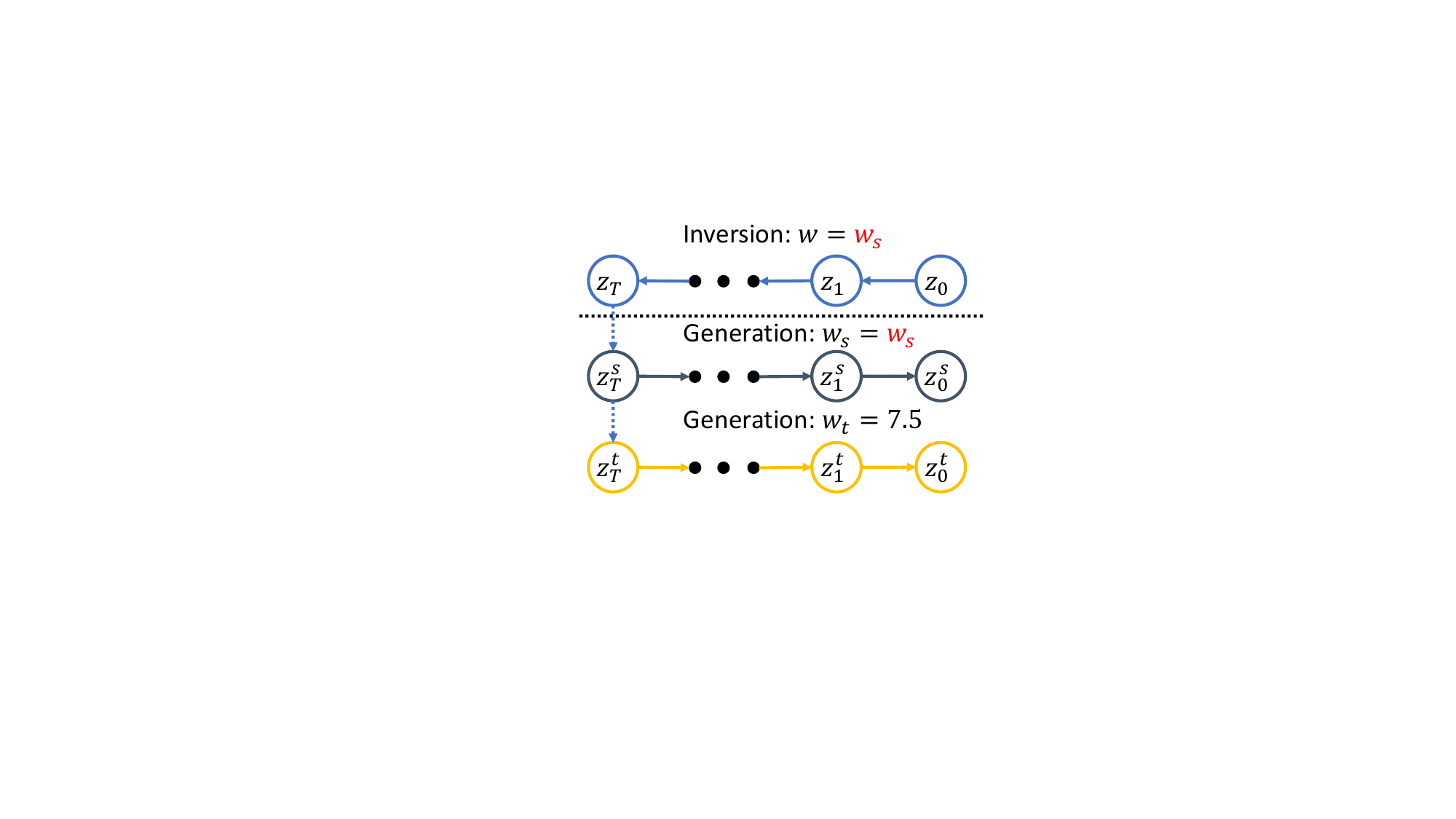}
\mbox{\footnotesize (b) SimInversion for editing}
\end{minipage}
\caption{Illustration of image editing by DDIM inversion and ours. $z_0$ denotes the source image. $z_0^s$ and $z_0^t$ are generated images from the source and target branches, respectively.}\label{fig:illu}
\end{figure}

According to the architecture of the dual branch framework, research efforts can be categorized into two aspects: latent state optimization in the source branch and knowledge transfer to the target branch. First, due to the approximation error in each step of DDIM inversion, the accumulated error will mislead the obtained random noise, which makes it hard to recover the exact source image from the source branch. Many algorithms have been developed to mitigate the challenge. Concretely, negative-prompt inversion~\cite{np} and null-text inversion~\cite{nt} optimize the embeddings of null text in classifier-free guidance to minimize the gap between latent states from inversion and generation while direct inversion~\cite{directinv} directly projects the latent states in the generation process back to those in the inversion process to reduce the approximation error. Second, appropriate knowledge from the source branch will be transferred to the target branch for editing. For the target branch, prompt-to-prompt editing~\cite{p2p} shows that fusing the cross-attention map from the source branch can introduce the desired knowledge to the target branch, and Masactrl~\cite{masa} studies the self-attention layer for effective transfer. 

In this work, we revisit the vanilla DDIM inversion for the source branch. First, we find that the approximation error in the source branch is mainly from the asymmetric guidance scale from the classifier-free guidance, where the inversion process has the guidance scale as $1$ while the generation process holds a much larger scale to focus on the text condition as shown in Fig.~\ref{fig:illu} (a). To mitigate the issue from the asymmetric guidance scale, we propose to keep the same guidance scale for inversion and generation in the source branch. The symmetric guidance scale helps reduce the approximation error in the latent states for the source image. 

Moreover, we investigate the selection of the symmetric guidance scale and our theoretical analysis shows that the approximation error can be further reduced by selecting an appropriate guidance scale beyond the default settings. By adopting the symmetric guidance scale for the source branch while keeping the target branch unchanged, our simple framework improves the performance of vanilla DDIM inversion significantly. Fig.~\ref{fig:illu} (b) illustrates the proposed method. Compared with DDIM inversion, our method only changes the weight of the guidance scale without introducing any additional operations. The main contributions of this work can be summarized as follows.

\begin{itemize}
\item To improve the generation fidelity of the source image, we propose to have a symmetric guidance scale for the source branch while keeping the large guidance scale for the target branch. The method only changes one parameter in DDIM inversion without sacrificing the simplicity of the framework. 
\item We analyze the selection of the guidance scale from the perspective of minimizing the approximation error. While Corollary~\ref{cor:2} shows that the default setting of $0$ or $1$ is a good choice, Corollary~\ref{cor:1} implies that there can be an optimal solution in $[0,1]$, e.g., $0.5$ as suggested by our experiments. 
\item The proposed simple inversion (SimInversion) is evaluated on the recently proposed data set PIE-Bench~\cite{directinv}. Experiments with different editing types confirm that our method can improve the DDIM inversion for image editing under the same framework.
\end{itemize}

\section{Related Work}
\label{sec:related}
While many methods have been developed for image editing~\cite{ip2p,masa,p2p,directinv,styled,np,nt,edict}, we focus on inversion-based methods due to their efficiency and promising performance. Most inversion-based methods have a dual-branch framework to preserve the information from the source branch and edit the image in the target branch. Therefore, different methods are developed for the source branch~\cite{directinv,styled,np,nt} and target branch~\cite{masa,p2p}, respectively. 

For the vanilla DDIM inversion for editing~\cite{p2p}, the source branch will share the large guidance scale from the target branch, which introduces the additional approximation error for recovering the source image and results in the degenerated performance. With the classifier-free guidance, the denoising network depends on a text condition and a null text condition. Therefore, null-text inversion~\cite{nt} proposes to optimize the embeddings of null text to reduce the error from the asymmetric guidance scale. However, the embedding has to be optimized by learning, which is time-consuming. Negative-prompt inversion~\cite{np} improves the learning process of null text embeddings by setting it as the text condition from the source image, which implies a guidance scale of $1$ for the source branch. In this work, we keep the original null text embeddings but change the guidance scale for the source branch directly. Moreover, our analysis shows that $1$ is not the optimal solution for minimizing the approximation error and the performance can be facilitated with an appropriate guidance scale, e.g., $0.5$. Recently, direct inversion~\cite{directinv} studies an extreme case that projects the latent states of the generation process to those from the inversion process for the source branch to eliminate the accumulated errors from the iterative steps. While this method demonstrates better performance than \cite{nt} and \cite{np}, it requires an additional DDIM forward pass for projection, which increases the inference time. Moreover, the projection operator interrupts the generation process of the source branch, which may degenerate the editing performance. On the contrary, our method keeps the simple framework of the DDIM inversion and shows that the approximation error can be minimized by obtaining an appropriate guidance scale for the source branch. Note that some work (e.g., \cite{edict}) tries to eliminate approximation error with a more complex framework that may reduce the editability as shown in~\cite{directinv}. Therefore, only methods sharing a similar framework will be compared in the empirical study.

\section{Simple Inversion}
\label{sec:method}
Before introducing our method, we will briefly review DDIM inversion in the next subsection.

\subsection{DDIM Inversion for Image Editing}
Diffusion models propose to generate an image from a random sampled noise by a series of denoising steps. Concretely, given the noise $z_T$, an image can be obtained iteratively with $t$ from $T$ to $1$ according to
\begin{eqnarray}\label{eq:ddim}
z_{t-1} = \sqrt{\frac{\alpha_{t-1}}{\alpha_t}}z_t + \sqrt{\alpha_{t-1}}(\sqrt{\frac{1}{\alpha_{t-1}}-1} - \sqrt{\frac{1}{\alpha_t}-1})\e_\theta(z_t,t)
\end{eqnarray}
where $z_0$ denotes the generated image or its corresponding latent for the decoder. $\e_\theta(z_t,t)$ is a learned model to predict the noise at $t$-th iteration, where a text condition $C$ can be included for classifier-free guidance~\cite{cfree} as $\e_\theta(z_t,t, C)$ or a null text condition as $\e_\theta(z_t,t, \emptyset)$. $\{\alpha_t\}$ is a sequence of predefined constants for denoising. The process is known as DDIM sampling~\cite{ddim}, which is a deterministic sampling but shares the same training objective as DDPM~\cite{ddpm}.

Given a source image $z_0^s$ and its corresponding text condition $C_s$ (e.g., caption), image editing with text guidance aims to obtain a new image $z_0^t$ with the target text condition $C_t$. According to Eqn.~\ref{eq:ddim}, the target image will be obtained from a random noise $z_T^t$. To preserve the knowledge from the source image, the random noise $z_T^s$ that generates the source image will be applied as $z_T^t=z_T^s$ for image editing.

However, the initial noise $z_T^s$ cannot be inferred from Eqn.~\ref{eq:ddim} with the image $z_0^s$. To illustrate the issue, we rearrange the terms in Eqn.~\ref{eq:ddim} and have
\begin{eqnarray}\label{eq:ori}
z_t = \sqrt{\frac{\alpha_t}{\alpha_{t-1}}} z_{t-1} +\sqrt{\alpha_t}(\sqrt{\frac{1}{\alpha_t}-1} - \sqrt{\frac{1}{\alpha_{t-1}}-1})\e_\theta(z_t,t)
\end{eqnarray}
where the estimation of $z_t$ depends on the prediction from itself $\e_\theta(z_t,t)$. To approximate the reverse of the generation process, DDIM inversion~\cite{ddim} considers replacing $z_t$ by $z_{t-1}$ for denoising and the process becomes
\begin{eqnarray}\label{eq:ddiminv}
z_t = \sqrt{\frac{\alpha_t}{\alpha_{t-1}}} z_{t-1} +\sqrt{\alpha_t}(\sqrt{\frac{1}{\alpha_t}-1} - \sqrt{\frac{1}{\alpha_{t-1}}-1})\e_\theta(z_{t-1},t)
\end{eqnarray}
which helps obtain the initial noise $z_T^s$ for editing. With $z_T^s$, many existing methods apply the dual branch framework for editing~\cite{p2p,directinv,np,nt}, where one branch is for recovering the source image with $C_s$ and the other is to encode the target information with $C_t$. By fusing the generation processes of these branches, the source image can be edited according to the target text condition.

\subsection{Approximation Error in Dual Branch Image Editing}
Since the generation performance heavily depends on the inversion process, we will investigate the approximation error in DDIM inversion to demonstrate our motivation. In Eqn.~\ref{eq:ddiminv}, $\e_\theta(z_t,t)$ is approximated by $\e_\theta(z_{t-1},t)$ with the assumption that $z_{t-1}$ is close to $z_t$ in \cite{ddim}. To analyze the error, we apply the Taylor expansion for vector-valued function on $z_{t-1}$ and rewrite the denoising network on $z_t$ as
\begin{eqnarray}
h(\e_\theta(z_t,t)) \approx h(\e_\theta(z_{t-1},t))+ J_\e(z_{t-1})h(z_t -z_{t-1}) + o(\|h(z_t-z_{t-1})\|_2)
\end{eqnarray}
where $J_\e(z_{t-1})$ is the Jacobian matrix of $\e$ on $z_{t-1}$. $h$ is a reshape operator that converts the tensor $z$ to a vector. Then, the approximation error can be depicted in the following proposition. All detailed proof of this work can be found in the appendix.

\begin{prop}\label{prop:1}
Assuming that the gradient of $\e$ on $z_{t-1}$ is bounded as $\|J_\e(z_{t-1})\|_F\leq c$, we have
\begin{eqnarray}
\|h(\e_\theta(z_t,t)) - h(\e_\theta(z_{t-1},t))\|_2\leq \OO(\|h(z_t-z_{t-1})\|_2)
\end{eqnarray}
\end{prop}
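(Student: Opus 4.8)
The plan is to read the bound directly off the first-order Taylor expansion already stated in the text. First I would subtract $h(\e_\theta(z_{t-1},t))$ from both sides of that expansion, so that the quantity to control becomes
\[
h(\e_\theta(z_t,t)) - h(\e_\theta(z_{t-1},t)) = J_\e(z_{t-1})\,h(z_t - z_{t-1}) + o(\|h(z_t - z_{t-1})\|_2).
\]
Taking Euclidean norms of both sides and applying the triangle inequality then splits the task into bounding the leading linear term and the remainder separately.

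For the linear term, I would invoke the standard submultiplicative inequality $\|A v\|_2 \le \|A\|_F\,\|v\|_2$ (the spectral norm of a matrix is dominated by its Frobenius norm), applied with $A = J_\e(z_{t-1})$ and $v = h(z_t - z_{t-1})$. Together with the hypothesis $\|J_\e(z_{t-1})\|_F \le c$ this yields
\[
\|J_\e(z_{t-1})\,h(z_t - z_{t-1})\|_2 \le c\,\|h(z_t - z_{t-1})\|_2,
\]
which is $\OO(\|h(z_t - z_{t-1})\|_2)$. The remainder $o(\|h(z_t - z_{t-1})\|_2)$ is by definition asymptotically smaller than $\|h(z_t - z_{t-1})\|_2$ and is therefore likewise absorbed into $\OO(\|h(z_t - z_{t-1})\|_2)$. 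Adding the two contributions gives the claimed inequality.

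I do not expect a genuine obstacle, since the statement is essentially a restatement of first-order Taylor control with the Jacobian bound supplying the explicit constant. The only points needing mild care are: (i) confirming that $\e_\theta$ is differentiable at $z_{t-1}$ so that the Taylor expansion is legitimate, which I would take for granted given that $\e_\theta$ is a smooth network; and (ii) keeping the asymptotic bookkeeping honest, namely that the triangle inequality lets the leading $\OO$ term and the lower-order $o$ term combine without the remainder inflating the rate. Both are routine, so the entire argument reduces to the two-line norm estimate above.
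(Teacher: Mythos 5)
Your argument is exactly the paper's: rearrange the stated Taylor expansion, apply the triangle inequality to separate the linear term from the remainder, and bound the linear term by $c\,\|h(z_t - z_{t-1})\|_2$ via the Frobenius-norm domination $\|Av\|_2 \le \|A\|_F\|v\|_2$. The proof is correct and takes essentially the same route as the one in the appendix.
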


According to Proposition~\ref{prop:1}, when $\|h(z_t-z_{t-1})\|_2$ is sufficiently small, the approximation error $\delta=\|h(\e_\theta(z_t,t)) - h(\e_\theta(z_{t-1},t))\|_2$ becomes negligible, which is consistency with the observation in \cite{ddim}.

However, to improve the sample quality with the text condition, a different denoising step is adopted for generation, which amplifies the error. Concretely, the combined prediction is applied for classifier-free guidance~\cite{cfree}
\begin{eqnarray}\label{eq:w}
\e'(z_t,t,w) = w\e(z_t, t, C) + (1-w)\e(z_t,t, \emptyset)
\end{eqnarray}
where $w$ is the guidance scale and $w>1$ is to emphasize the text condition $C$. The asymmetric process for image editing with vanilla DDIM inversion is illustrated in Fig.~\ref{fig:illu} (a).

\subsection{Simple Inversion with Symmetric Guidance Scale}

In DDIM inversion, both the source branch and target branch share the same $w$ for generation. While the target branch is for editing that focuses on the sample quality of following the text prompt, the source branch is to preserve the information from the source image and the symmetric guidance scale is essential for reducing the approximation error as in the following proposition.
\begin{prop}\label{prop:2}
Let $w_i$ and $w_g$ denote the guidance scale for the inversion and generation process, respectively. Then, with Eqn.~\ref{eq:ori}, the source image can be recovered perfectly when $w_i=w_g$.
\end{prop}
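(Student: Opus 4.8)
The plan is to show that, once both processes are equipped with classifier-free guidance, the generation recursion (Eqn.~\ref{eq:ddim}) and the exact inversion recursion (Eqn.~\ref{eq:ori}) are step-by-step algebraic inverses of one another precisely when the two guidance scales agree. First I would write the guided generation step by substituting the combined prediction of Eqn.~\ref{eq:w} at scale $w_g$ into Eqn.~\ref{eq:ddim}, giving
\[
z_{t-1} = \sqrt{\frac{\alpha_{t-1}}{\alpha_t}}z_t + \sqrt{\alpha_{t-1}}\left(\sqrt{\frac{1}{\alpha_{t-1}}-1} - \sqrt{\frac{1}{\alpha_t}-1}\right)\e'(z_t,t,w_g).
\]
Solving this relation for $z_t$ in closed form---multiplying through by $\sqrt{\alpha_t/\alpha_{t-1}}$ and regrouping the square-root coefficients---returns exactly Eqn.~\ref{eq:ori} but with the noise term $\e'(z_t,t,w_g)$. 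In other words, the guided generation step is nothing more than a rearrangement of the same implicit relation between $z_{t-1}$ and $z_t$ that defines inversion.

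Second, I would place the guided inversion step alongside it, namely Eqn.~\ref{eq:ori} carrying $\e'(z_t,t,w_i)$. The key observation is that the two expressions share identical coefficients multiplying $z_{t-1}$ and the noise term; the only degree of freedom distinguishing the forward map $z_{t-1}\mapsto z_t$ from the inverse of the backward map $z_t\mapsto z_{t-1}$ is the guidance scale sitting inside $\e'$. Because the exact form evaluates the denoising network at the \emph{same} latent $z_t$ in both directions---this is exactly what is gained by working with Eqn.~\ref{eq:ori} rather than the approximation in Eqn.~\ref{eq:ddiminv}---the two noise terms coincide if and only if $w_i=w_g$. Hence, when $w_i=w_g$, the single-step inversion and generation maps are exact inverses.

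Finally, I would close the argument by induction on $t$. Starting from the source latent $z_0^s$ and applying the inversion recursion for $t=1,\dots,T$ produces a sequence $\{z_t^s\}$; running the guided generation from $z_T^s$ with the matched scale then reverses each step exactly, so the sequence is retraced back to $z_0^s$ with no residual error accumulating across steps. Therefore the source image is recovered perfectly, as claimed.

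The main obstacle is conceptual rather than computational: one must be careful that Eqn.~\ref{eq:ori} is an \emph{implicit} equation whose noise term depends on the unknown $z_t$, and that the statement concerns this exact form rather than the practical approximation of Eqn.~\ref{eq:ddiminv}. The only genuine content is recognizing that Eqn.~\ref{eq:ddim} and Eqn.~\ref{eq:ori} are two algebraic rearrangements of one and the same implicit step, so that the guidance scale is the sole parameter capable of breaking the forward/backward symmetry; once $w_i=w_g$ is imposed, exactness is immediate and the remaining square-root bookkeeping is routine.
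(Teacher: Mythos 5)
Your proposal is correct and follows essentially the same route as the paper: the paper likewise works with the exact (implicit) inversion step of Eqn.~\ref{eq:ori}, evaluates the guided noise at the same latent in both directions, computes the one-step residual $\|z_0-z_0'\|$ as a multiple of $\|(w_g-w_i)\e(z_1,1,C)+(w_i-w_g)\e(z_1,1,\emptyset)\|$, and concludes exact recovery when $w_i=w_g$ before extending to multiple steps. The only cosmetic difference is that the paper writes out the residual explicitly (which also quantifies the error when the scales differ), whereas you phrase the same fact as the forward and backward maps being algebraic rearrangements of one implicit relation.
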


Therefore, we propose to disentangle the guidance scale between dual branches and adopt the symmetric guidance scale for the source branch. 

First, to keep the editability, the large $w_t$ for the target branch remains the same for generation. Unlike DDIM inversion where $w_t$ is also applied for the source branch, we have a different $w_s$ instead. 

Considering that the source branch is for knowledge preservation, we have the same $w_s$ in the inversion and generation for the source branch to eliminate the approximation error from the asymmetric process as suggested by Proposition~\ref{prop:2}. The process of proposed simple inversion can be found in Fig.~\ref{fig:illu} (b).

Finally, we find that the value of $w_s$ can be further optimized to minimize the approximation error as follows.

\begin{prop}\label{prop:3}
Let $\x_c = h(\e(z_t, t, C) - \e(z_{t-1}, t, C))$ and $\x_\emptyset = h(\e(z_t,t, \emptyset)-\e(z_{t-1},t, \emptyset))$, then $\delta(w) = \|h(\e'_\theta(z_t,t,w)) - h(\e'_\theta(z_{t-1},t,w))\|_2$ is minimized when
\begin{eqnarray}
w^* = (\x_\emptyset-\x_c)^\top \x_\emptyset / \|\x_c-\x_\emptyset\|_2^2
\end{eqnarray}
\end{prop}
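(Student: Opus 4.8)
The plan is to express $\delta(w)$ as the norm of an affine combination of the two fixed vectors $\x_c$ and $\x_\emptyset$, and then minimize over the scalar $w$ by elementary calculus. First I would use the definition of the classifier-free guidance prediction in Eqn.~\ref{eq:w} together with the linearity of the reshape operator $h$ to rewrite the difference inside the norm. Since $\e'(z,t,w) = w\,\e(z,t,C) + (1-w)\,\e(z,t,\emptyset)$, the difference $h(\e'_\theta(z_t,t,w)) - h(\e'_\theta(z_{t-1},t,w))$ distributes over the two terms and, because the same scalar $w$ multiplies both the $z_t$ and $z_{t-1}$ pieces, it collapses exactly to $w\,\x_c + (1-w)\,\x_\emptyset$. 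Thus $\delta(w) = \|w\,\x_c + (1-w)\,\x_\emptyset\|_2$, an affine (one-parameter) path in the vector space.

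Next I would minimize $\delta(w)$, or equivalently the smooth function $\delta(w)^2 = \|w\,\x_c + (1-w)\,\x_\emptyset\|_2^2$, which avoids the nondifferentiability of the norm at the origin. Rewriting the argument as $\x_\emptyset + w(\x_c - \x_\emptyset)$, I would expand the squared norm into
\begin{eqnarray}
\delta(w)^2 = \|\x_\emptyset\|_2^2 + 2w\,(\x_c-\x_\emptyset)^\top \x_\emptyset + w^2\,\|\x_c-\x_\emptyset\|_2^2.
\end{eqnarray}
This is a scalar quadratic in $w$ with a strictly positive leading coefficient whenever $\x_c \neq \x_\emptyset$, so it is convex and attains a unique global minimum at its stationary point.

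Then I would differentiate with respect to $w$ and set the derivative to zero, giving the linear equation $2\,(\x_c-\x_\emptyset)^\top\x_\emptyset + 2w\,\|\x_c-\x_\emptyset\|_2^2 = 0$. Solving for $w$ yields
\begin{eqnarray}
w^* = -\frac{(\x_c-\x_\emptyset)^\top \x_\emptyset}{\|\x_c-\x_\emptyset\|_2^2} = \frac{(\x_\emptyset - \x_c)^\top \x_\emptyset}{\|\x_c-\x_\emptyset\|_2^2},
\end{eqnarray}
which matches the stated formula after the sign flip in the numerator; convexity confirms this is a minimizer rather than a saddle or maximizer. Geometrically this is just the orthogonal projection of the origin onto the affine line $\{w\,\x_c + (1-w)\,\x_\emptyset\}$, and the optimality condition is the statement that the residual $w^*\x_c + (1-w^*)\x_\emptyset$ is orthogonal to the line's direction $\x_c - \x_\emptyset$.

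There is no serious obstacle here: the argument is a one-variable least-squares projection. The only point requiring care is the degenerate case $\x_c = \x_\emptyset$, where the denominator vanishes; in that case $\delta(w)$ is constant in $w$ (equal to $\|\x_\emptyset\|_2$) and the minimizer is not unique, so I would note this as an implicit nondegeneracy assumption. I would also remark that $w^*$ may fall outside $[0,1]$ in general, which is why the paper separately discusses (in its corollaries) when the unconstrained optimum lands in the meaningful interval and connects the value $0.5$ to the symmetric case $\|\x_c\|_2 = \|\x_\emptyset\|_2$.
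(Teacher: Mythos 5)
Your proposal is correct and follows essentially the same route as the paper: rewrite $\delta(w)$ as $\|w(\x_c-\x_\emptyset)+\x_\emptyset\|_2$, minimize $\delta(w)^2$ by setting its derivative to zero, and read off $w^*$. The extra observations you add (convexity, the degenerate case $\x_c=\x_\emptyset$, and the projection interpretation) are sound but not needed beyond what the paper's one-line argument already establishes.
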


Proposition~\ref{prop:3} indicates that there is an optimal guidance scale that can minimize the approximation error. Moreover, its scale can be further demonstrated in the following Corollary. 

\begin{cor}\label{cor:1}
With notations in Proposition~\ref{prop:2}, if assuming the approximation error is independent between text condition and null text condition as $\x_c^\top\x_\emptyset=0$, we have $|w^*|\leq 1$.
\end{cor}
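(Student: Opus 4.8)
The plan is to substitute the orthogonality hypothesis $\x_c^\top \x_\emptyset = 0$ directly into the closed-form expression for $w^*$ supplied by Proposition~\ref{prop:3} and then simplify the numerator and denominator separately, since once both are reduced the bound should fall out immediately. The whole argument is essentially an algebraic simplification followed by a one-line inequality, so I do not anticipate any deep obstacle; the work is in tracking the cross terms correctly.

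First I would expand the numerator using bilinearity of the inner product, writing $(\x_\emptyset - \x_c)^\top \x_\emptyset = \x_\emptyset^\top \x_\emptyset - \x_c^\top \x_\emptyset = \|\x_\emptyset\|_2^2$, where the second term vanishes by assumption. Next I would expand the denominator as $\|\x_c - \x_\emptyset\|_2^2 = \|\x_c\|_2^2 - 2\,\x_c^\top \x_\emptyset + \|\x_\emptyset\|_2^2 = \|\x_c\|_2^2 + \|\x_\emptyset\|_2^2$, again discarding the cross term. Combining these gives the clean ratio $w^* = \|\x_\emptyset\|_2^2 / \big(\|\x_c\|_2^2 + \|\x_\emptyset\|_2^2\big)$.

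From this form the conclusion is transparent: both $\|\x_c\|_2^2$ and $\|\x_\emptyset\|_2^2$ are nonnegative, so the numerator is nonnegative and is bounded above by the denominator, which forces $0 \leq w^* \leq 1$ and hence $|w^*| \leq 1$. In fact this already shows the stronger statement that $w^*$ lies in the unit interval rather than merely having absolute value at most one, which is exactly the interpretation used elsewhere in the paper to motivate the choice $w_s = 0.5$ as a midpoint candidate.

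The only point requiring care is the degenerate case $\x_c = \x_\emptyset = \mathbf{0}$, for which the denominator vanishes and $w^*$ is undefined; I would note that this situation corresponds to the approximation error $\delta(w)$ being identically zero, so that every guidance scale is optimal and the bound holds vacuously. Apart from flagging this, the argument is complete, and I expect the main (minor) subtlety to be simply presenting the two expansions cleanly rather than any genuine difficulty.
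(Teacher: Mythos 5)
Your proof is correct, and it takes a slightly different — and in fact sharper — route than the paper. The paper bounds the numerator $(\x_\emptyset-\x_c)^\top\x_\emptyset$ by Cauchy--Schwarz to get $|w^*|\leq \|\x_\emptyset\|_2/\|\x_c-\x_\emptyset\|_2$, and only then invokes orthogonality to expand the denominator as $\sqrt{\|\x_c\|_2^2+\|\x_\emptyset\|_2^2}$, concluding $|w^*|\leq 1$. You instead apply the orthogonality hypothesis to both the numerator and the denominator, which collapses $w^*$ to the exact closed form $\|\x_\emptyset\|_2^2/(\|\x_c\|_2^2+\|\x_\emptyset\|_2^2)$. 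This buys you the strictly stronger conclusion $0\leq w^*\leq 1$ (nonnegativity for free, not just $|w^*|\leq 1$), which better matches how the corollary is actually used in the paper — motivating a search over $[0,1]$ and the choice $w_s=0.5$ — whereas the paper's Cauchy--Schwarz step discards the sign information. Your handling of the degenerate case $\x_c=\x_\emptyset=\mathbf{0}$ is a point the paper silently skips (its own bound also divides by $\|\x_c-\x_\emptyset\|_2$, which vanishes there), so flagging it is appropriate rather than pedantic. No gaps.
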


Corollary~\ref{cor:1} shows the possible range for the optimal $w_s$, but it is still challenging to obtain the result without $\e(z_t,t,C)$ and $\e(z_t,t,\emptyset)$. To set an applicable weight, we investigate the upper-bound of the error.

\begin{prop}
With notations in Proposition~\ref{prop:2}, we have
\begin{eqnarray}
\delta(w)\leq |w|\|\x_c\|_2 + |1-w| \|\x_\emptyset\|_2
\end{eqnarray}
\end{prop}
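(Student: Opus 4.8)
The plan is to bound the norm $\delta(w) = \|h(\e'_\theta(z_t,t,w)) - h(\e'_\theta(z_{t-1},t,w))\|_2$ directly by substituting the classifier-free guidance definition from Eqn.~\ref{eq:w} and applying the triangle inequality. First I would expand the combined prediction at both $z_t$ and $z_{t-1}$ using Eqn.~\ref{eq:w}, writing
\begin{eqnarray}
h(\e'_\theta(z_t,t,w)) - h(\e'_\theta(z_{t-1},t,w)) = w\,\x_c + (1-w)\,\x_\emptyset,
\end{eqnarray}
where the linearity of the reshape operator $h$ and of the combination in $w$ lets the difference split cleanly into the term proportional to $\x_c$ and the term proportional to $\x_\emptyset$, exactly the two quantities defined in Proposition~\ref{prop:3}.

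Next I would take the Euclidean norm of both sides and apply the triangle inequality to separate the two contributions:
\begin{eqnarray}
\delta(w) = \|w\,\x_c + (1-w)\,\x_\emptyset\|_2 \leq \|w\,\x_c\|_2 + \|(1-w)\,\x_\emptyset\|_2.
\end{eqnarray}
Finally, pulling the scalar coefficients $w$ and $(1-w)$ out of each norm using absolute-value homogeneity, $\|w\,\x_c\|_2 = |w|\,\|\x_c\|_2$ and $\|(1-w)\,\x_\emptyset\|_2 = |1-w|\,\|\x_\emptyset\|_2$, yields the claimed bound $\delta(w) \leq |w|\,\|\x_c\|_2 + |1-w|\,\|\x_\emptyset\|_2$.

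The proof is essentially a one-line application of the triangle inequality once the decomposition is established, so there is little genuine obstacle. The only point requiring care is justifying the linear decomposition in the first display: I need that $h$ distributes over the linear combination in Eqn.~\ref{eq:w} and that the guidance weights match at $z_t$ and $z_{t-1}$ so that grouping by $\x_c$ and $\x_\emptyset$ is valid. Since $h$ is merely a reshape (hence linear) and the same scalar $w$ is used at both time arguments, this grouping is immediate, and the remainder is routine.
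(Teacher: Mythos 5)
Your proof is correct and matches the paper's argument, which is simply ``It is directly from the triangle inequality'' applied to the decomposition $\delta(w) = \|w\,\x_c + (1-w)\,\x_\emptyset\|_2$. You have merely spelled out the linearity of $h$ and the grouping step that the paper leaves implicit.
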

\begin{proof}
It is directly from the triangle inequality.
\end{proof}
Since $w\geq 0$ for text condition, the upper-bound can be minimized as
\begin{cor}\label{cor:2}
Let $\delta'(w) = |w|\|\x_c\|_2 + |1-w| \|\x_\emptyset\|_2$. When $w\geq 0$, we have
\begin{eqnarray}
\delta'(w)\geq \min\{\delta'(0),\delta'(1)\}
\end{eqnarray}
\end{cor}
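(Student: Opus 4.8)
The plan is to analyze the piecewise-linear function $\delta'(w) = |w|\|\x_c\|_2 + |1-w|\|\x_\emptyset\|_2$ over the domain $w \geq 0$ and show its minimum is attained at one of the two breakpoints $w=0$ or $w=1$. First I would observe that since both $\|\x_c\|_2 \geq 0$ and $\|\x_\emptyset\|_2 \geq 0$, the function $\delta'$ is a nonnegative combination of two absolute-value functions, hence convex in $w$; moreover it is piecewise linear with potential kinks only at $w=0$ (from $|w|$) and $w=1$ (from $|1-w|$).

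The key step is to split the domain $[0,\infty)$ into the two pieces $[0,1]$ and $[1,\infty)$ and write $\delta'$ as an explicit affine function on each. On $[0,1]$ we have $|w|=w$ and $|1-w|=1-w$, so $\delta'(w) = (\|\x_c\|_2 - \|\x_\emptyset\|_2)w + \|\x_\emptyset\|_2$, which is linear and therefore attains its minimum over the closed interval $[0,1]$ at an endpoint, i.e.\ at $w=0$ or $w=1$. On $[1,\infty)$ we have $|w|=w$ and $|1-w|=w-1$, so $\delta'(w) = (\|\x_c\|_2+\|\x_\emptyset\|_2)w - \|\x_\emptyset\|_2$, whose slope is nonnegative, so $\delta'$ is nondecreasing there and its minimum over $[1,\infty)$ is at $w=1$. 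Combining the two pieces, the minimum over all $w\geq 0$ is $\min\{\delta'(0),\delta'(1)\}$.

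I do not anticipate a genuine obstacle here, since the result is essentially the statement that a convex piecewise-linear function on a half-line is minimized at a breakpoint; the only point requiring a small amount of care is handling the slope sign of $\|\x_c\|_2 - \|\x_\emptyset\|_2$ on $[0,1]$, which can be either nonnegative or negative. Both cases, however, place the minimizer at an endpoint of $[0,1]$, so the case distinction collapses into the single conclusion $\delta'(w)\geq\min\{\delta'(0),\delta'(1)\}$ without needing to determine which endpoint is optimal.
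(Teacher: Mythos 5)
Your argument is correct and is essentially the paper's own proof made explicit: the paper likewise observes that $\delta'$ is monotonic (affine) on each of $[0,1]$ and $[1,\infty)$, so the minimum over $w\geq 0$ is attained at $w=0$ or $w=1$. Your version just writes out the affine expressions and slope signs on each piece, which adds useful detail but no new idea.
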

\begin{proof}
It is due to that the $\delta'(w)$ is a monotonic function when $w\in[0,1]$ and $w\in[1,\infty)$.
\end{proof}

Corollary~\ref{cor:2} demonstrates that the upper-bound of the approximation error can be minimized when $w=0$ or $1$. While the optimal $w^*$ varies for different images, \{0,1\} guarantees the overall worst-case performance, which is consistent with the training process of DDPM sampling with classifier-free guidance~\cite{cfree}. Therefore, we can empirically set $w_s$ to be $0$ or $1$ and implement the symmetric DDIM inversion in Alg.~\ref{alg:1}. Compared with the vanilla DDIM inversion, the only difference is the symmetric generation process for the source branch as shown in Step~4. Since the framework of DDIM inversion has not been changed, our proposed simple inversion (SimInversion) can be incorporated with existing editing methods to improve the performance of DDIM inversion.

\begin{algorithm}[t]
\caption{\textbf{Sim}ple \textbf{Inversion} for Image Editing (SimInversion)}
\begin{algorithmic}[1]
\STATE {\bf Input:} source image $z_0^s$, source prompt $C_s$, target prompt $C_t$, $w_s$, $w_t$
\STATE Obtain $z_T$ with $w_s$ by DDIM inversion in Eqn.~\ref{eq:ddiminv}
\FOR{$t=T,...,1$}
\STATE Obtain $z_{t-1}^s$ by \textcolor{red}{$w_s$} and $C_s$ // \textcolor{blue}{DDIM Inversion: Obtain $z_{t-1}^s$ by $w_t$ and $C_s$}
\STATE Obtain $z_{t-1}^t$ by $w_t$ and $C_t$
\STATE Edit $z_{t-1}^t$ by $z_{t-1}^s$ with any existing editing methods
\ENDFOR
\RETURN $z_0^t$
\end{algorithmic}\label{alg:1}
\end{algorithm}

\section{Experiments}
\label{sec:exp}

We conduct experiments on PIE-Bench~\cite{directinv} to evaluate the proposed method. The data set contains 700 images from natural and artificial scenes. Each image is associated with one editing type, and there is a total of 10 different editing types. For a fair comparison, 7 metrics from \cite{directinv} are included to measure the quality of image editing. Concretely, the evaluation comprises edit prompt-image consistency for the whole image and editing regions~\cite{clipsim}, structure distance for structure preservation\cite{sdist}, background preservation by PSNR, LPIPS~\cite{lpips}, MSE and SSIM~\cite{ssim}. While edit prompt-image consistency measures editability, all other metrics are for knowledge preservation. More details can be found in \cite{directinv}. All experiments of this work are implemented on a single V100 GPU and the average performance over all images is reported.

\subsection{Quantitive Comparison on PIE-Bench}
We compare the proposed method to existing inversion-based methods in Table~\ref{ta:inv}. Since our method is to improve the source branch, baseline methods focusing on the same problem are included, i.e., Null-Text inversion (NT)~\cite{nt}, Negative-Prompt inversion (NP)~\cite{np}, StyleDiffsuion (SD)~\cite{styled}, Direct inversion (Direct)~\cite{directinv} and the vanilla DDIM inversion (DDIM). Both NT and SD require an additional optimization process while others are in a training-free manner. Meanwhile, prompt-to-prompt editing~\cite{p2p} for the target branch is applied for all methods due to its superior performance over other methods~\cite{directinv}. In addition, stable diffusion v1.4~\cite{stable} is adopted as the diffusion model shared by all methods. The steps in inversion and generation are set to $50$. For our method, we have ``SimInv-w'' denoting the different guidance scales of $w_s$, and $w_t$ is fixed as $7.5$. 
Finally, ``SimInv$^*$'' selects the better performance between ``SimInv-0'' and ``SimInv-1'' for each image.

\begin{table}[htbp]
\centering
\caption{Comparision on PIE-Bench with 7 metrics. All methods have prompt-to-prompt~\cite{p2p} for editing the target branch. ``Opt'' denotes the additional optimization process, which is time-consuming. Results of baseline methods are reported from \cite{directinv}. The best performance is in bold.}\label{ta:inv}
\resizebox{\linewidth}{!}{
\begin{tabular}{|l|c|cc|c|cccc|}\hline
\multirow{3}{*}{Methods}&\multirow{3}{*}{Opt}&\multicolumn{2}{c|}{Editability}&\multicolumn{5}{c|}{Preservation}\\ \cline{3-9}
&&\multicolumn{2}{c|}{CLIP Similariy} & Structure & \multicolumn{4}{c|}{Background Preservation}\\
&&Whole $\uparrow$ &Edited $\uparrow$ &Distance$_{10^{-3}}$ $\downarrow$ &PSNR $\uparrow$     & LPIPS$_{10^{-3}}$ $\downarrow$  & MSE$_{10^{-4}}$ $\downarrow$     & SSIM$_{10^{-2}}$ $\uparrow$\\\hline
DDIM&&25.01       &  22.44 &69.43          & 17.87  & 208.80  & 219.88  & 71.14 \\
NT& \checkmark&24.75        & 21.86&13.44        & 27.03  & 60.67  & 35.86  & 84.11\\
SD&\checkmark&24.78 & 21.72 &\textbf{11.65} & 26.05 & 66.10 & 38.63 & 83.42 \\
NP&&24.61     &    21.87&16.17       &  26.21 &  69.01  &  39.73 & 83.40    \\
Direct&&25.02&22.10&\textbf{11.65}  & \textbf{27.22} & \textbf{54.55}  & \textbf{32.86} & \textbf{84.76} \\\hline
SimInv-0&&24.90&22.19&20.00&24.93&84.68&50.15&81.99\\
SimInv-0.5&&25.18&22.20&15.75&25.89&70.23&40.94&83.42\\
SimInv-1&&25.20&22.21&16.05&25.75&74.29&43.08&83.07\\\hline
SimInv$^*$&&\textbf{25.67}&\textbf{22.80}&14.76&26.04&69.83&40.27&83.51\\\hline
\end{tabular}}
\end{table}

First, we can find that by disentangling the guidance scale for the source branch and target branch, our method improves DDIM on metrics for structure and background preservation by a large margin. It is because a symmetric guidance scale for the source branch has less approximation error than the asymmetric guidance scale, which helps preserve the knowledge from the source image as in Proposition~\ref{prop:2}. With the decoupled guidance scale, the proposed method reduces the structure distance by $78.7\%$, which shows the potential of the simple framework of DDIM inversion. Second, SimInversion achieves the best edit consistency result measured by CLIP similarity. It implies that the image obtained from our method is more consistent with the target text condition than other methods. Since text-guided image generation has been studied extensively, the unchanged target branch in our framework helps unleash the power of the pre-trained image generation model. When comparing our proposal with different source guidance scales, we observe that SimInv-1 outperforms SimInv-0 on all metrics. The binary guidance scale selects different text conditions for denoising. The better performance of SimInv-1 shows that the text condition can be more helpful than the null text condition in reducing the approximation error in inversion. However, if selecting the better performance from SimInversion with $w_s\in\{0,1\}$, the performance of SimInv$^*$ can be further improved. Concretely, the whole image edit consistency increases by $0.47$ while the structure distance decreases by $1.29$. This phenomenon demonstrates that counterintuitively, $w_s=1$ is not always the best option and $w_s=0$ can be more appropriate for some cases as shown in Fig.~\ref{fig:w}. The result further confirms our analysis in Corollary~\ref{cor:2}. Finally, when setting $w_s=0.5$, the preservation performance is consistently improved over $0$ or $1$. This phenomenon verifies our analysis in Corollary~\ref{cor:1} and also suggests a new default value for inversion-based image editing methods. Experiments with other editing methods can be found in the appendix.

\subsection{Qualitative Comparison}
After the quantitive evaluation, we include the edited images for a qualitative comparison in this subsection. Considering that PIE-Bench contains 10 different editing types, 2 examples are illustrated for each editing type and results are summarized in Fig.~\ref{fig:type0}-\ref{fig:type9}. We have Fig.~\ref{fig:type0}-\ref{fig:type3} in the main manuscript while others are in the appendix. Besides the inversion-based methods, we also include a learning-based method Instruct-P2P~\cite{ip2p} that learns a diffusion model with training data obtained from \cite{p2p} in the comparison.

\begin{figure}[htbp]
\centering
\begin{minipage}{\textwidth}
\centering
\includegraphics[height = 2in]{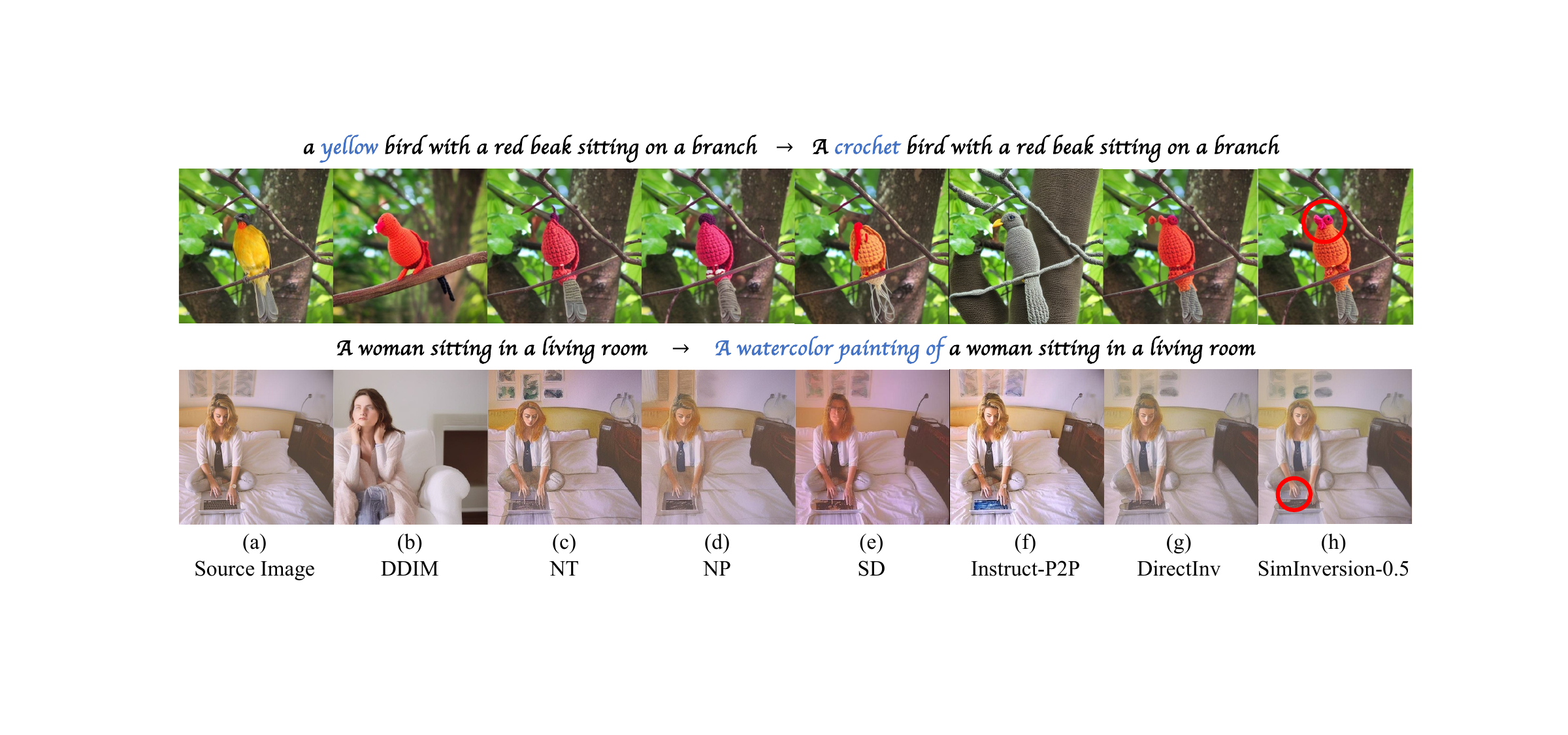}
\captionof{figure}{Illustration of image editing for random editing. The difference is highlighted by red bounding boxes.}\label{fig:type0}
\end{minipage}

\begin{minipage}{\textwidth}
\centering
\includegraphics[height = 2in]{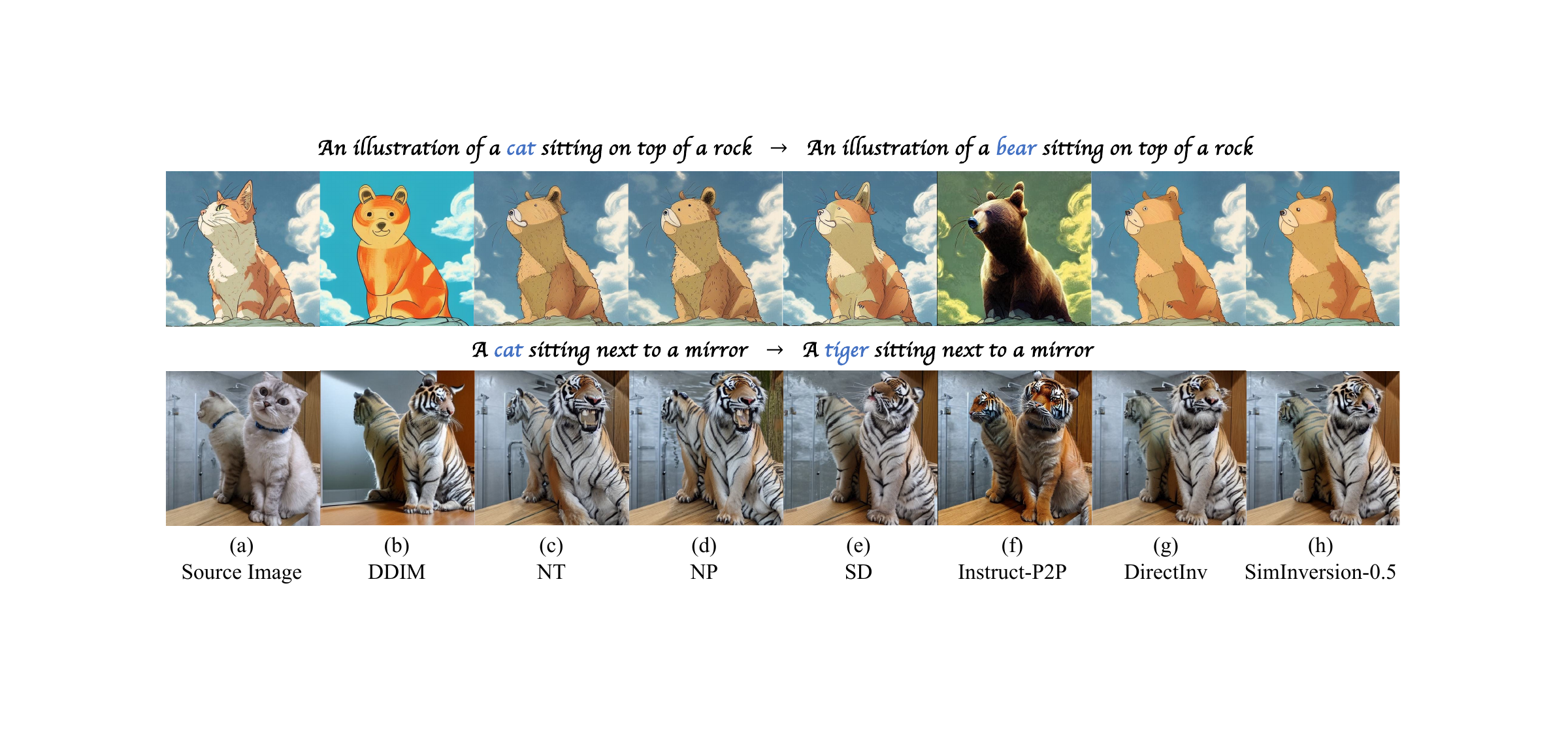}
\captionof{figure}{Illustration of image editing for changing object.}\label{fig:type1}
\end{minipage}

\begin{minipage}{\textwidth}
\centering
\includegraphics[height = 2in]{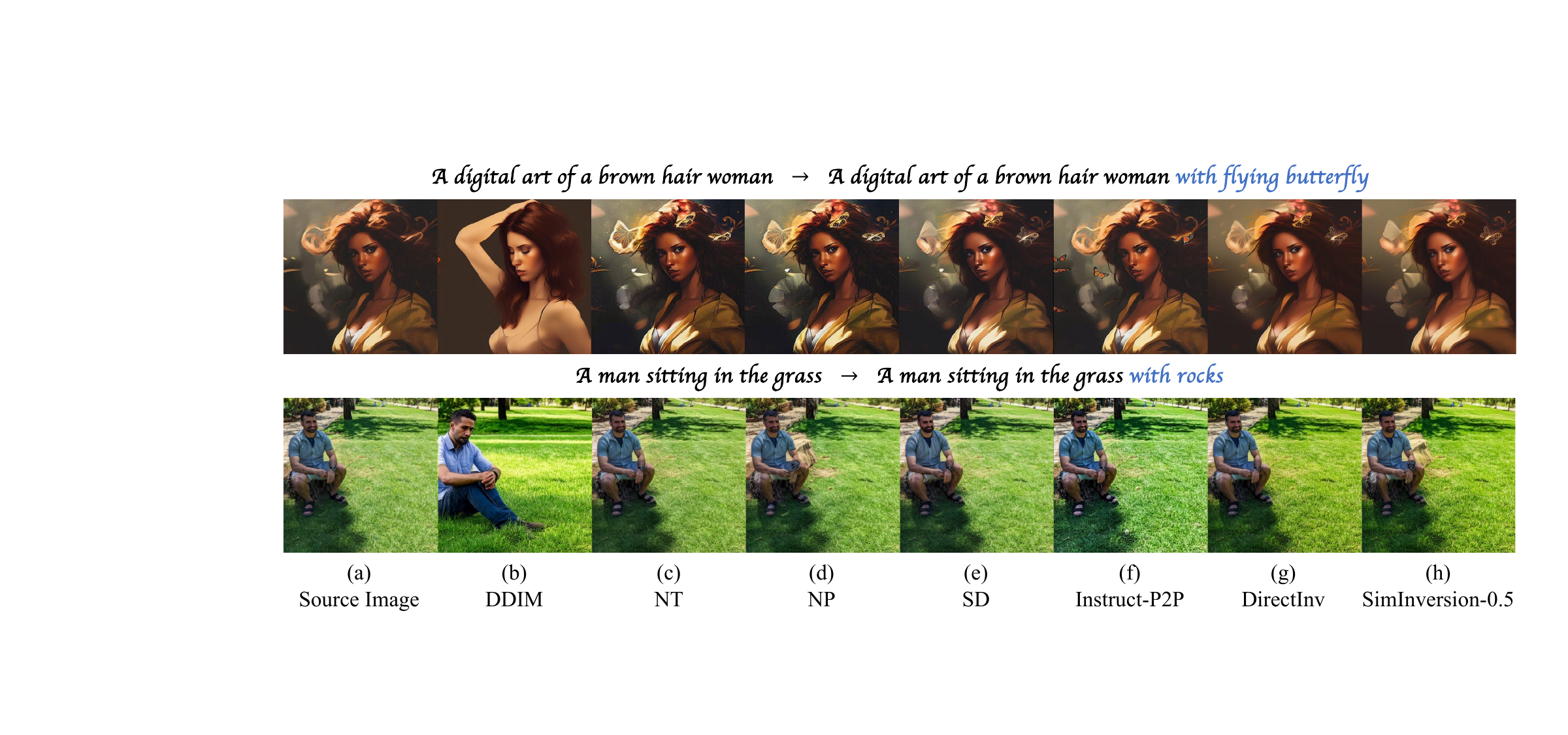}
\captionof{figure}{Illustration of image editing for adding object.}\label{fig:type2}
\end{minipage}
\end{figure}

\begin{figure}[htbp]
\centering
\begin{minipage}{\textwidth}
\centering
\includegraphics[height = 2in]{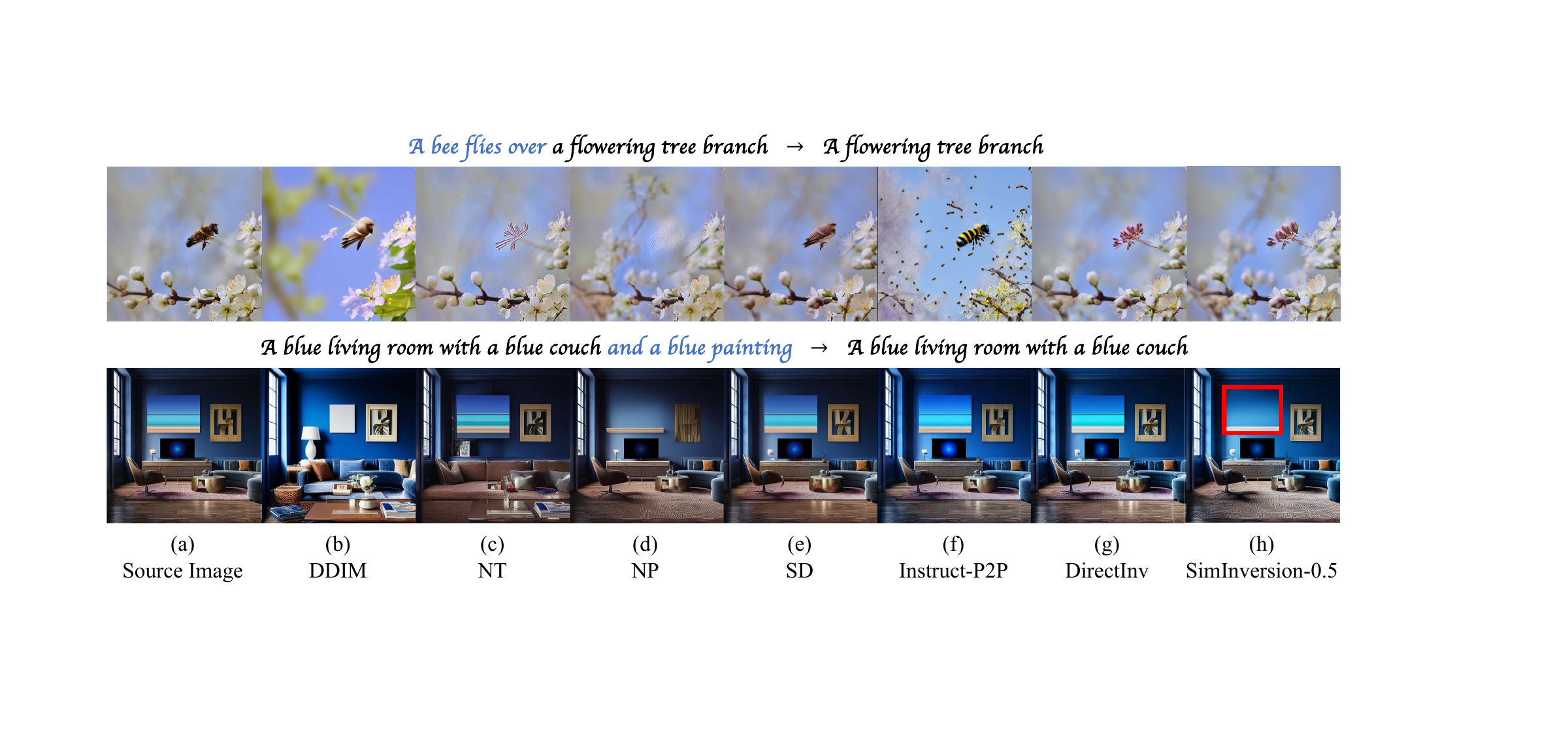}
\captionof{figure}{Illustration of image editing for deleting object. The difference is highlighted by red bounding boxes.}\label{fig:type3}
\end{minipage}
\end{figure}

First, with the simple inversion framework, our method is still effective for editing. We observe that SimInversion can successfully tailor the source image according to the target text prompt for all editing types. Moreover, our method can capture details that are ignored by existing methods. For example, SimInversion preserves the eyes of the bird and the shape of the hand in Fig.~\ref{fig:type0} while direct inversion misses those details. Finally, Instruct-P2P can obtain the target image that is consistent with the target text condition but the details from the source image may be lost. On the contrary, all inversion-based methods can preserve the structure of the source image well. It implies that the inversion-based image editing method can achieve a better trade-off between knowledge preservation and editability.

\subsection{Ablation Study}
\subsubsection{Effect of $w_s$}
While we only have $w_s=\{0,0.5,1\}$ for the main comparison, we vary $w_s$ in $\{0,0.2,0.5,0.8,1,2\}$ and report the performance in Table~\ref{ta:wablation}.

\begin{table}[htbp]
\centering
\caption{Comparision with different $w_s$ for SimInversion on PIE-Bench with 7 metrics. SimInv$^*$ consists of the best performance from $w_s\in\{0,0.2,0.5,0.8,1,2\}$. The best performance excluding SimInv$^*$ is in bold.}\label{ta:wablation}
\resizebox{\linewidth}{!}{
\begin{tabular}{|l|cc|c|cccc|}\hline
\multirow{2}{*}{Methods}&\multicolumn{2}{c|}{CLIP Similariy} & Structure & \multicolumn{4}{c|}{Background Preservation}\\
&Whole $\uparrow$ &Edited $\uparrow$ &Distance$_{10^{-3}}$ $\downarrow$ &PSNR $\uparrow$     & LPIPS$_{10^{-3}}$ $\downarrow$  & MSE$_{10^{-4}}$ $\downarrow$     & SSIM$_{10^{-2}}$ $\uparrow$\\\hline
SimInv-0&24.90&22.19&20.00&24.93&84.68&50.15&81.99\\
SimInv-0.2&25.06&\textbf{22.21}&17.55&25.47&76.46&44.51&82.79\\
SimInv-0.5&25.18&22.20&15.75&\textbf{25.89}&\textbf{70.23}&\textbf{40.94}&\textbf{83.42}\\
SimInv-0.8&25.17&\textbf{22.21}&\textbf{15.66}&25.88&71.58&41.50&83.30\\
SimInv-1&\textbf{25.20}&\textbf{22.21}&16.05&25.75&74.29&43.08&83.07\\
SimInv-2&24.80&22.15&20.52&24.88&87.16&53.15&81.61\\\hline
\end{tabular}}
\end{table}

First, when $w_s\in[0,1]$, the preservation performance of SimInversion surpasses DDIM inversion with a clear margin. According to our analysis in Corollary~\ref{cor:1}, the approximation error can be small in the appropriate range with the mild assumption. Second, while the optimal $w_s$ varies on different metrics, $w_s=0.5$ demonstrates a consistently good performance with all metrics, which is consistent with previous observations. Compared with $w_s\leq 1$, $w_s=2$ performs much worse, especially for the preservation. While the optimal $w_s$ is unbounded, the upper-bound of the approximation error can be optimized by $0,1$ as in Corollary~\ref{cor:2}, which guarantees the worst-case performance. 

\begin{figure}[htbp]
\centering
\includegraphics[height = 5in]{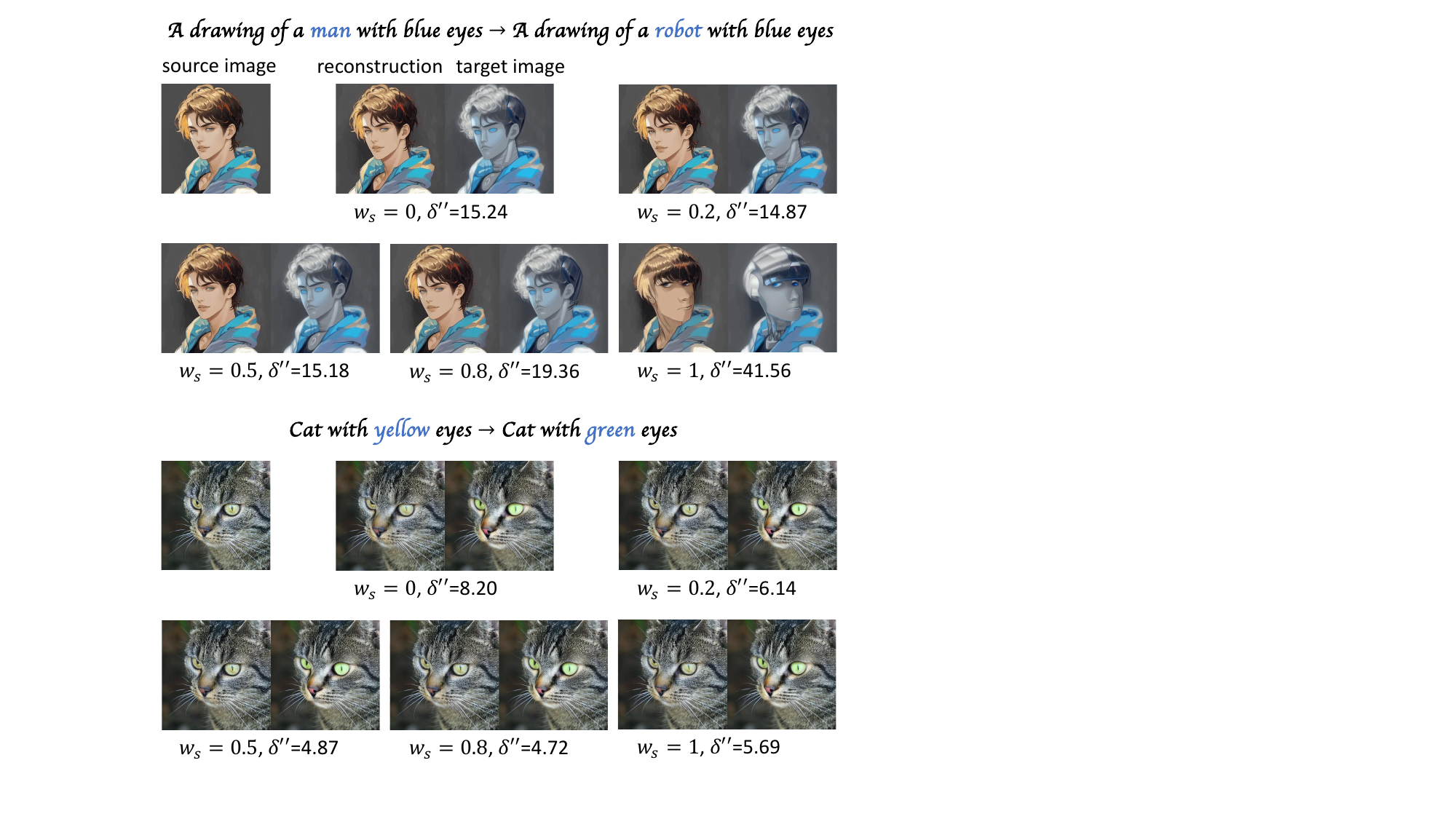}
\captionof{figure}{Illustration of approximation error $\delta''$ with different $w_s$.}\label{fig:w}
\end{figure}

To further illustrate the influence of $w_s$ on reconstruction, Fig.~\ref{fig:w} shows two images edited with different $w_s$ and the approximation error is measured by $\delta'' = \|h(z_0^s - \hat{z}_0^s)\|_2$, where $z_0^s$ denotes the latent state of the source image after encoding and $\hat{z}_0^s$ is the output generated from the source branch. For the first image about a person, $w_s=0$ can recover the source image with a small approximation error. On the contrary, $w_s=1$ fails to capture the source image and the approximation error is more than two times that with $0$. However, $w_s=0.2$ achieves the best reconstruction result, which confirms our analysis. For the second image about the cat, $w_s=1$ outperforms $w_s=0$ while the best performance is from $0.8$. Meanwhile, $w_s=0.5$ is close to the optimal result in different scenarios, which confirms our analysis.

\subsubsection{Comparison of Running Time}
Besides the performance, we compare the running time in Table~\ref{ta:time}. First, all training-free methods, e.g., DDIM, NP, Direct, and ours are much more efficient than the optimization-based methods, e.g., NT and SD. It demonstrates that the cost of optimization overwhelms that of inversion, and is expensive for image editing. Moreover, Direct inversion requires an additional DDIM forward pass, which costs an additional running time of 18s compared with DDIM inversion. On the contrary, when $w_s=\{0,1\}$ as denoted by ``SimInv-binary'', the proposed method only applies the denoising network with a single text condition, which can save the running time on the source branch when generating. Therefore, SimInversion runs even faster than DDIM inversion with the binary guidance scale. When a float guidance scale is adopted as in the ablation study above, e.g., $w_s=0.5$, our inversion process will infer noise from the source text condition and the null text condition simultaneously and thus require some extra time. However, the increase in the running time is only 4s over DDIM inversion, which is still faster than Direct inversion. The comparison shows that our method can improve the performance of DDIM inversion while preserving its efficiency.

\begin{table}[h]
\centering
\caption{Comparision of Running Time for editing a single image with a V100 GPU. SD is slower than NT~\cite{directinv}, which is not included in the comparison.}\label{ta:time}
\begin{tabular}{|c|c|c|c|c|c|c|c|}\hline
Methods&DDIM&NT&NP&Direct&SimInv-binary&SimInv-float\\\hline
Running time (s)&25&171&25&43&21&29\\\hline
\end{tabular}
\end{table}

\section{Conclusion}
\label{sec:conclude}

In this work, we revisit DDIM inversion for image editing. To minimize the approximation error for knowledge preservation, we propose to disentangle the guidance scale for the source and target generation branches and keep a symmetric guidance scale for the source branch. Moreover, our analysis shows that there exists an optimal guidance scale for the source branch which can lie in $[0,1]$. The observation is consistent with the success of DDIM inversion while indicating the future direction for improvement that obtains the appropriate guidance scale for each image efficiently.

\paragraph{Limitations} This work aims to investigate the approximation error in vanilla DDIM inversion. While our performance is competitive, it may not be the best compared to methods with other frameworks.

\paragraph{Broader Impacts} Generated images may provide fake information but it can be mitigated by including watermarks in generated images.

{\small
\bibliographystyle{abbrv}
\bibliography{siminv}
}


\appendix

\section{Theoretical Analysis}
\subsection{Proof of Proposition~\ref{prop:1}}
\begin{proof}
\begin{eqnarray*}
&&\|h(\e_\theta(z_t,t)) - h(\e_\theta(z_{t-1},t))\|_2 = \|J_\e(z_{t-1})h(z_t -z_{t-1})+o(\|h(z_t-z_{t-1})\|_2)\|_2\\
&&\leq \|J_\e(z_{t-1})h(z_t -z_{t-1})\|_2 + o(\|h(z_t-z_{t-1})\|_2)\\
&&\leq c\|h(z_t -z_{t-1})\|_2 + o(\|h(z_t-z_{t-1})\|_2)
\end{eqnarray*}
\end{proof}

\subsection{Proof of Proposition~\ref{prop:2}}
\begin{proof}
We consider the one-step process for analysis.
For inversion, we have
\[z_1 = \sqrt{\frac{\alpha_1}{\alpha_{0}}} z_{0} +\sqrt{\alpha_1}(\sqrt{\frac{1}{\alpha_1}-1} - \sqrt{\frac{1}{\alpha_{0}}-1})\e'_\theta(z_1,1, w_i)\]
where we have $\e'_\theta(z_1,1, w_i)$ as Eqn.~\ref{eq:ori} in lieu of $\e'_\theta(z_0,1, w_i)$ to eliminate the approximation error from noise prediction and focus on the effect of the guidance scale.
Then, the image will be recovered by generation
\[z'_0 = \sqrt{\frac{\alpha_{0}}{\alpha_1}}z_1 + \sqrt{\alpha_{0}}(\sqrt{\frac{1}{\alpha_{0}}-1} - \sqrt{\frac{1}{\alpha_1}-1})\e'_\theta(z_1,1,w_g)\]
The distance to the ground-truth latent state can be computed as
\begin{align*}
&\|z_0-z'_0\| = \|z_0 - \sqrt{\frac{\alpha_{0}}{\alpha_1}}z_1 + \sqrt{\alpha_{0}}(\sqrt{\frac{1}{\alpha_{0}}-1} - \sqrt{\frac{1}{\alpha_1}-1})\e'_\theta(z_1,1,w_g)\|\\
& = \sqrt{\alpha_0}|\sqrt{\frac{1}{\alpha_1}-1} - \sqrt{\frac{1}{\alpha_0}-1}|\|\e'_\theta(z_1,1,w_g) - \e'_\theta(z_1,1,w_i)\|\\
&=\sqrt{\alpha_0}|\sqrt{\frac{1}{\alpha_1}-1} - \sqrt{\frac{1}{\alpha_0}-1}| \|(w_g - w_i)\e(z_1,1,C) + (w_i-w_g)\e(z_1,1,\emptyset)\|
\end{align*}
Therefore, when $w_g=w_i$, the distance is minimized and $z'_0$ recovers the ground-truth $z_0$. The same analysis can be extended to the inversion process with multiple steps.
\end{proof}

\subsection{Proof of Proposition~\ref{prop:3}}
\begin{proof}
With notations in Proposition~\ref{prop:2}, we have
\[\delta(w) = \|w(\x_c-\x_\emptyset) + \x_\emptyset\|_2\]
By minimizing $\delta(w)^2$ and letting the gradient to $0$, we have the desired result.
\end{proof}

\subsection{Proof of Corollary~\ref{cor:1}}
\begin{proof}
The value of $w^*$ can be bounded as
\[\|w^*\|_2 = \|(\x_\emptyset-\x_c)^\top \x_\emptyset\|_2/\|\x_c-\x_\emptyset\|_2^2\leq \|\x_\emptyset\|_2/\|\x_c-\x_\emptyset\|_2\]
With the independent assumption, we have
\[\|w^*\|_2\leq \|\x_\emptyset\|_2/\sqrt{\|\x_c\|_2^2+\|\x_\emptyset\|_2^2}\leq 1\]
\end{proof}

\section{Experiments}

\subsection{Application on other Editing Methods}
Besides prompt-to-prompt editing~\cite{p2p}, we also evaluate the proposed inversion method with MasaCtrl~\cite{masa} in Table.~\ref{ta:masa}. MasaCtrl applies null text for DDIM inversion and source image generation, i.e., $C=\emptyset$, which yields a special case of $w_s=0$ in SimInversion. Nevertheless, we vary the value of $w_s$ and compare the performance to the default setting.

\begin{table}[h]
\centering
\caption{Comparision with different $w_s$ and MasaCtrl for editing. The best performance is in bold.}\label{ta:masa}
\resizebox{\linewidth}{!}{
\begin{tabular}{|c|cc|c|cccc|}\hline
\multirow{2}{*}{Methods}&\multicolumn{2}{c|}{CLIP Similariy} & Structure & \multicolumn{4}{c|}{Background Preservation}\\
&Whole $\uparrow$ &Edited $\uparrow$ &Distance$_{10^{-3}}$ $\downarrow$ &PSNR $\uparrow$     & LPIPS$_{10^{-3}}$ $\downarrow$  & MSE$_{10^{-4}}$ $\downarrow$     & SSIM$_{10^{-2}}$ $\uparrow$\\\hline
DDIM/SimInv-0&23.96& 21.16& 28.38 & 22.17  & 106.62  & 86.97  & 79.67\\ 
SimInv-0.2&24.37&21.36&\textbf{27.23}&\textbf{22.30}&101.31&\textbf{85.55}&80.22\\
SimInv-0.5&24.45&21.38&27.65&22.24&\textbf{99.39}&86.80&\textbf{80.25}\\
SimInv-0.8&24.40&21.44&28.28&22.09&101.56&90.48&79.94\\
SimInv-1.0&\textbf{24.46}&\textbf{21.46}&29.64&21.92&105.61&95.05&79.54\\\hline
\end{tabular}}
\end{table}

According to Table~\ref{ta:masa}, we can observe that $w_s=0$ outperforms $w_s=1$ for MasaCtrl, which confirms the default selection. However, a better preservation and editing performance can be obtained by increasing $w_s$. It is because selecting appropriate $w_s$ can further reduce the approximation error. Finally, $w_s=0.5$ shows a better trade-off between preservation and editability than $w_s=0$. It implies that the choice of $w_s=0.5$ is applicable for different editing methods.

\subsection{Effect of $w_t$}
Besides $w_s$, we also evaluate the effect of different $w_t$ in Table~\ref{ta:wt}. Evidently, $w_t=7.5$ balances the preservation and editability well when $w_s=0.5$. It also confirms that the selection of $w_s$ will not influence that of $w_t$ and these two guidance scales can be disentangled.

\begin{table}[h]
\centering
\caption{Comparision with different $w_t$ for SimInversion on PIE-Bench with 7 metrics. $w_s$ is fixed as $0.5$. The best performance is in bold.}\label{ta:wt}
\resizebox{\linewidth}{!}{
\begin{tabular}{|c|cc|c|cccc|}\hline
\multirow{2}{*}{$w_t$}&\multicolumn{2}{c|}{CLIP Similariy} & Structure & \multicolumn{4}{c|}{Background Preservation}\\
&Whole $\uparrow$ &Edited $\uparrow$ &Distance$_{10^{-3}}$ $\downarrow$ &PSNR $\uparrow$     & LPIPS$_{10^{-3}}$ $\downarrow$  & MSE$_{10^{-4}}$ $\downarrow$     & SSIM$_{10^{-2}}$ $\uparrow$\\\hline
7&25.10&22.15&\textbf{15.21}&\textbf{26.00}&\textbf{69.03}&\textbf{39.92}&\textbf{83.55}\\
7.5&25.18&22.20&15.75&25.89&70.23&40.94&83.42\\
8&\textbf{25.26}&\textbf{22.26}&16.33&25.78&71.46&41.98&83.28\\\hline
\end{tabular}}
\end{table}

\subsection{Effect of Denoising Steps}
To evaluate the performance of $w_s$ with different denoising steps, we vary the number of steps in $\{30,50,100\}$ and summarize the result in Table~\ref{ta:steps}. Obviously, the selection of $w_s$ is robust to the number of steps and $w_s=0.5$ demonstrates the best performance among various settings.

\begin{table}[htbp]
\centering
\caption{Comparision on PIE-Bench with 7 metrics with different numbers of denoising steps.}\label{ta:steps}
\resizebox{\linewidth}{!}{
\begin{tabular}{|l|c|cc|c|cccc|}\hline
\multirow{2}{*}{Methods}&\multirow{2}{*}{\#Steps}&\multicolumn{2}{c|}{CLIP Similariy} & Structure & \multicolumn{4}{c|}{Background Preservation}\\
&&Whole $\uparrow$ &Edited $\uparrow$ &Distance$_{10^{-3}}$ $\downarrow$ &PSNR $\uparrow$     & LPIPS$_{10^{-3}}$ $\downarrow$  & MSE$_{10^{-4}}$ $\downarrow$     & SSIM$_{10^{-2}}$ $\uparrow$\\\hline
SimInv-0&30&24.68&21.89&20.25&24.83&87.02&50.66&81.72\\
SimInv-0.5&30&25.01&22.01&15.90&25.82&72.48&40.98&83.16\\
SimInv-1&30&25.01&22.06&16.57&25.66&75.94&43.06&82.90\\\hline
SimInv-0&50&24.90&22.19&20.00&24.93&84.68&50.15&81.99\\
SimInv-0.5&50&25.18&22.20&15.75&25.89&70.23&40.94&83.42\\
SimInv-1&50&25.20&22.21&16.05&25.75&74.29&43.08&83.07\\\hline
SimInv-0&100&25.04&22.28&18.53&25.30&78.90&47.40&82.63\\
SimInv-0.5&100&25.30&22.31&14.96&26.20&65.96&39.32&83.84\\
SimInv-1&100&25.27&22.30&15.33&26.07&69.43&41.14&83.55\\\hline
\end{tabular}}
\end{table}

Finally, we investigate the gap between $z_t$ and $z_{t-1}$ to evaluate the approximation error from replacing $z_t$ by $z_{t-1}$ in standard DDIM inversion when $w_s=0.5$. The gap is measured by $\ell=E_t[\|h(z_t-z_{t-1})\|_2/\|h(z_{t-1})\|_2]$ and the number of steps varies in $\{10,30,50,100,500\}$. Fig.~\ref{fig:step} shows the difference between $z_t$ and $z_{t-1}$ with image editing. We can observe that the gap is small for approximation and can be further reduced with the increase of denoising steps, which confirms the observation in \cite{ddim}. Moreover, \#steps=50 balance the efficiency and editing performance well.

\begin{figure}[htbp]
\centering
\includegraphics[width = \textwidth]{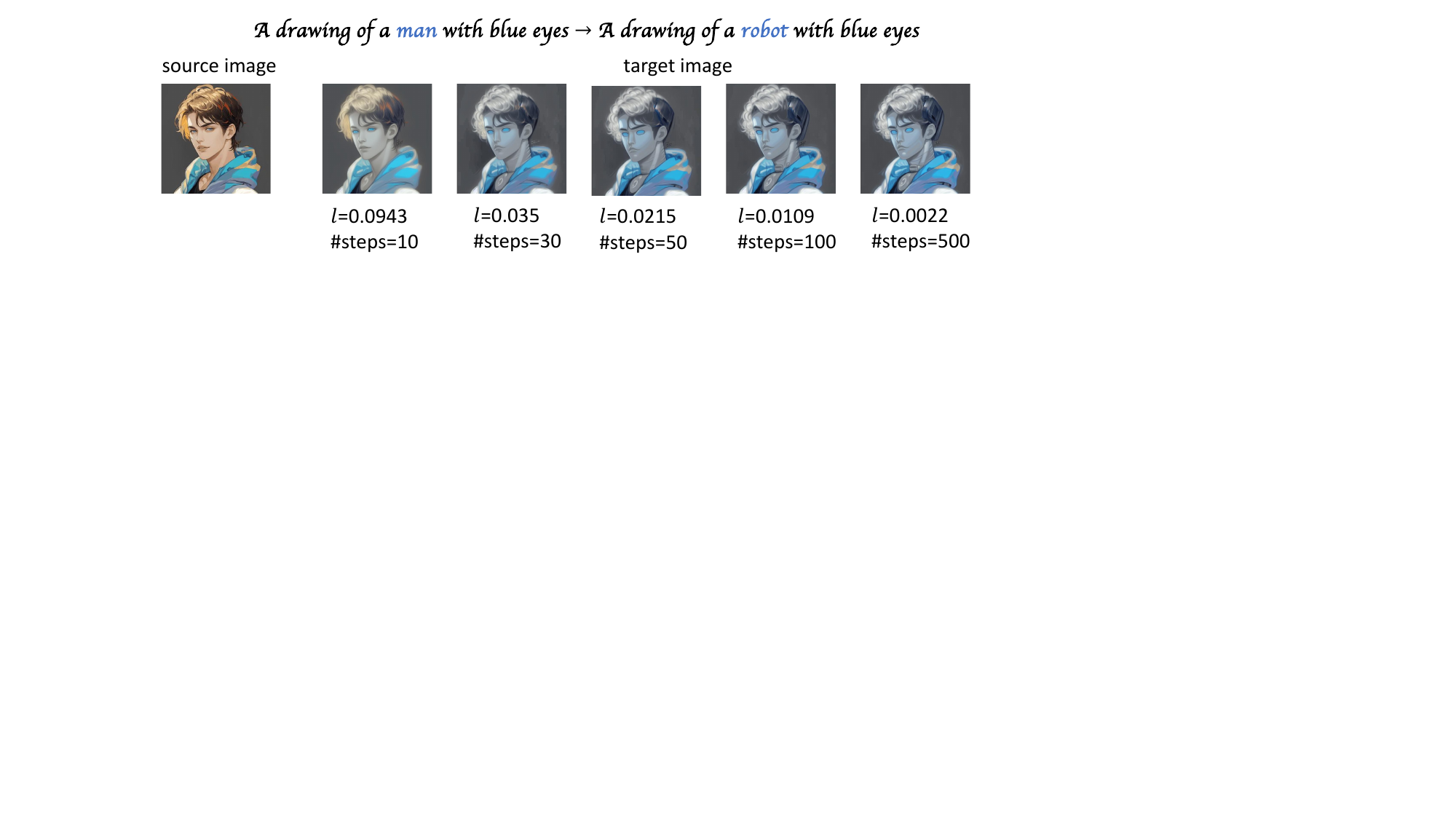}
\captionof{figure}{Illustration of approximation error $\ell=E_t[\|h(z_t-z_{t-1})\|_2/\|h(z_{t-1})\|_2]$ with different numbers of denoising steps.}\label{fig:step}
\end{figure}

\subsection{Qualitative Comparison}

\begin{figure}[htbp]
\centering
\begin{minipage}{\textwidth}
\centering
\includegraphics[height = 2in]{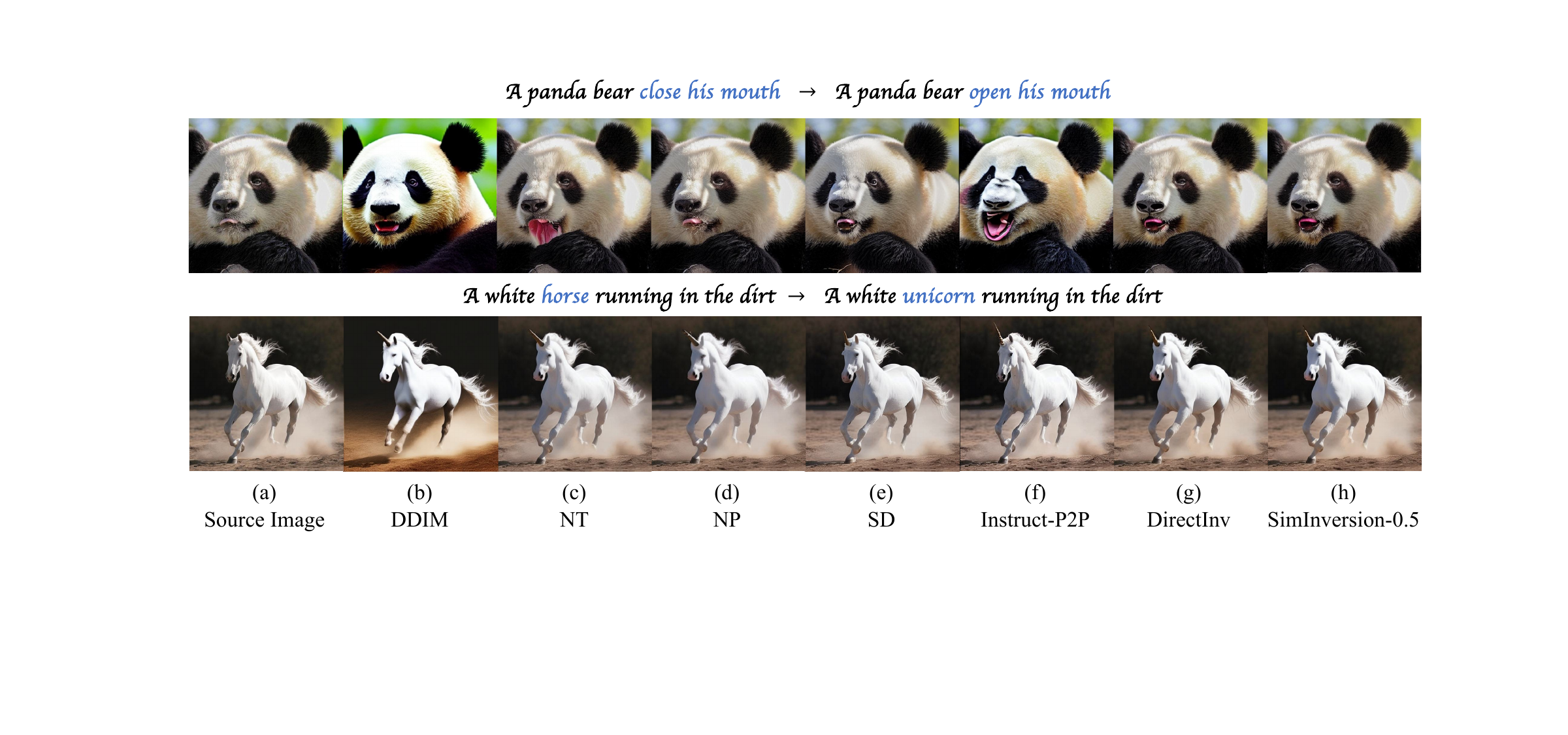}
\captionof{figure}{Illustration of image editing for changing content.}\label{fig:type4}
\end{minipage}

\begin{minipage}{\textwidth}
\centering
\includegraphics[height = 2in]{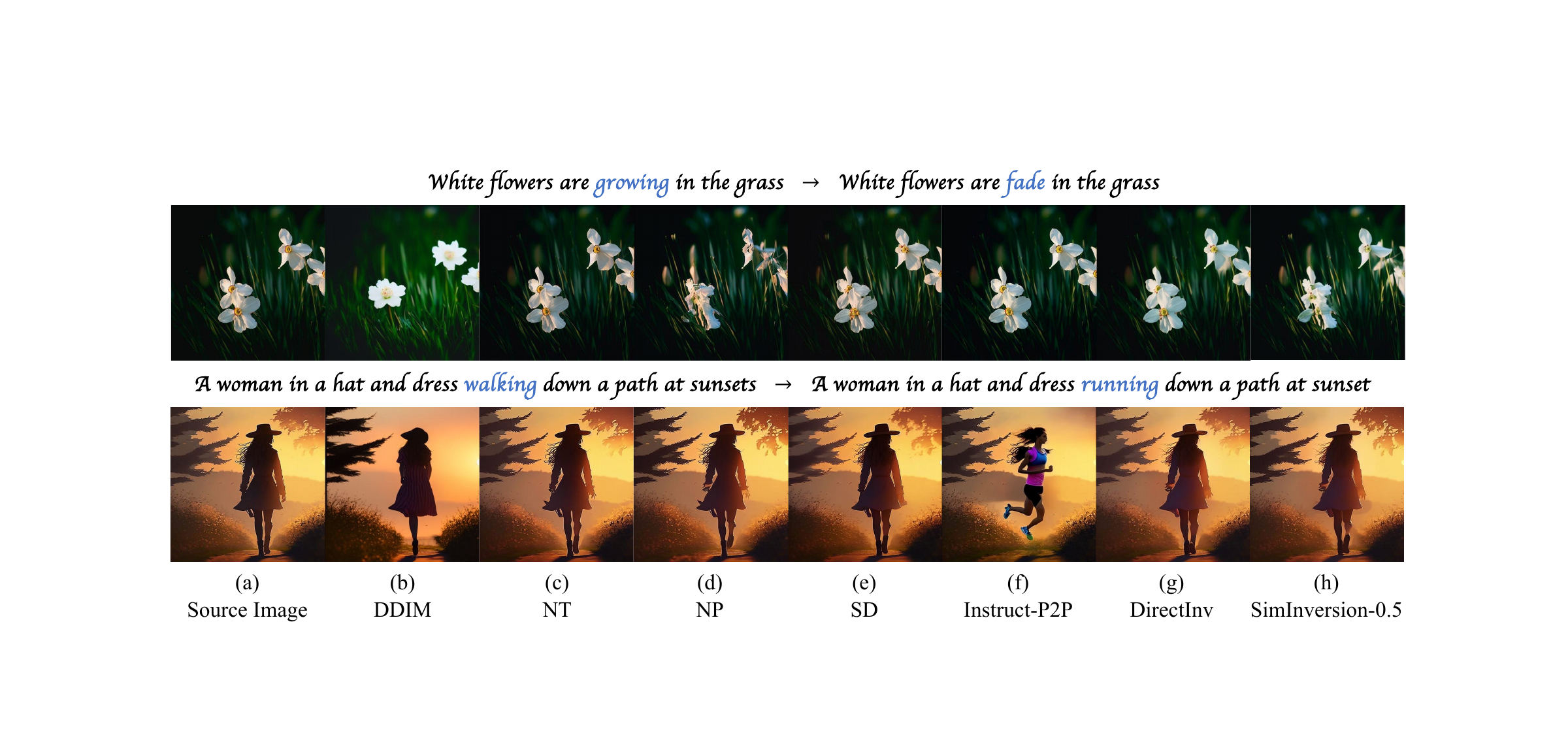}
\captionof{figure}{Illustration of image editing for changing pose.}\label{fig:type5}
\end{minipage}
\end{figure}

\begin{figure}[htbp]
\centering
\begin{minipage}{\textwidth}
\centering
\includegraphics[height = 2in]{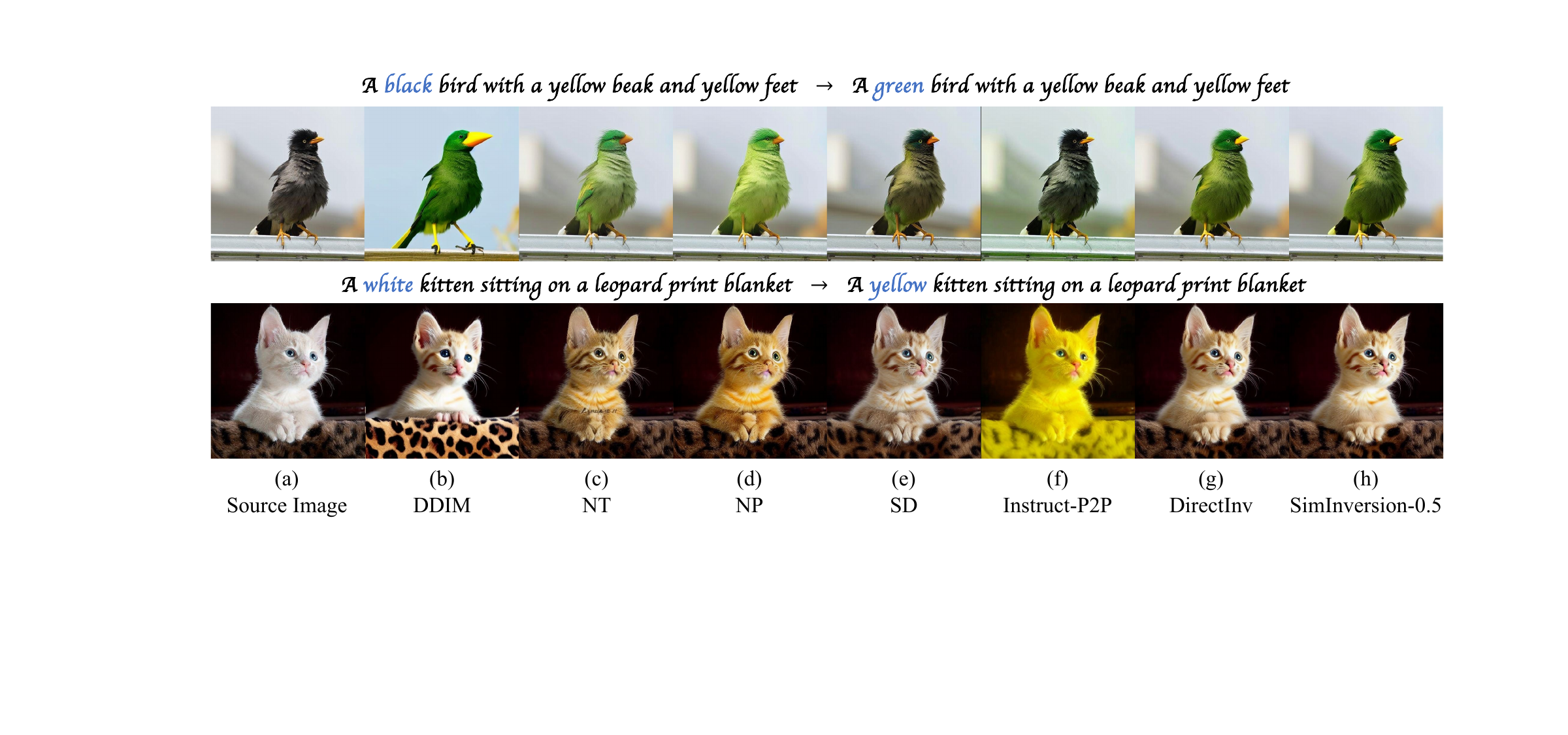}
\captionof{figure}{Illustration of image editing for changing color.}\label{fig:type6}
\end{minipage}

\begin{minipage}{\textwidth}
\centering
\includegraphics[height = 2in]{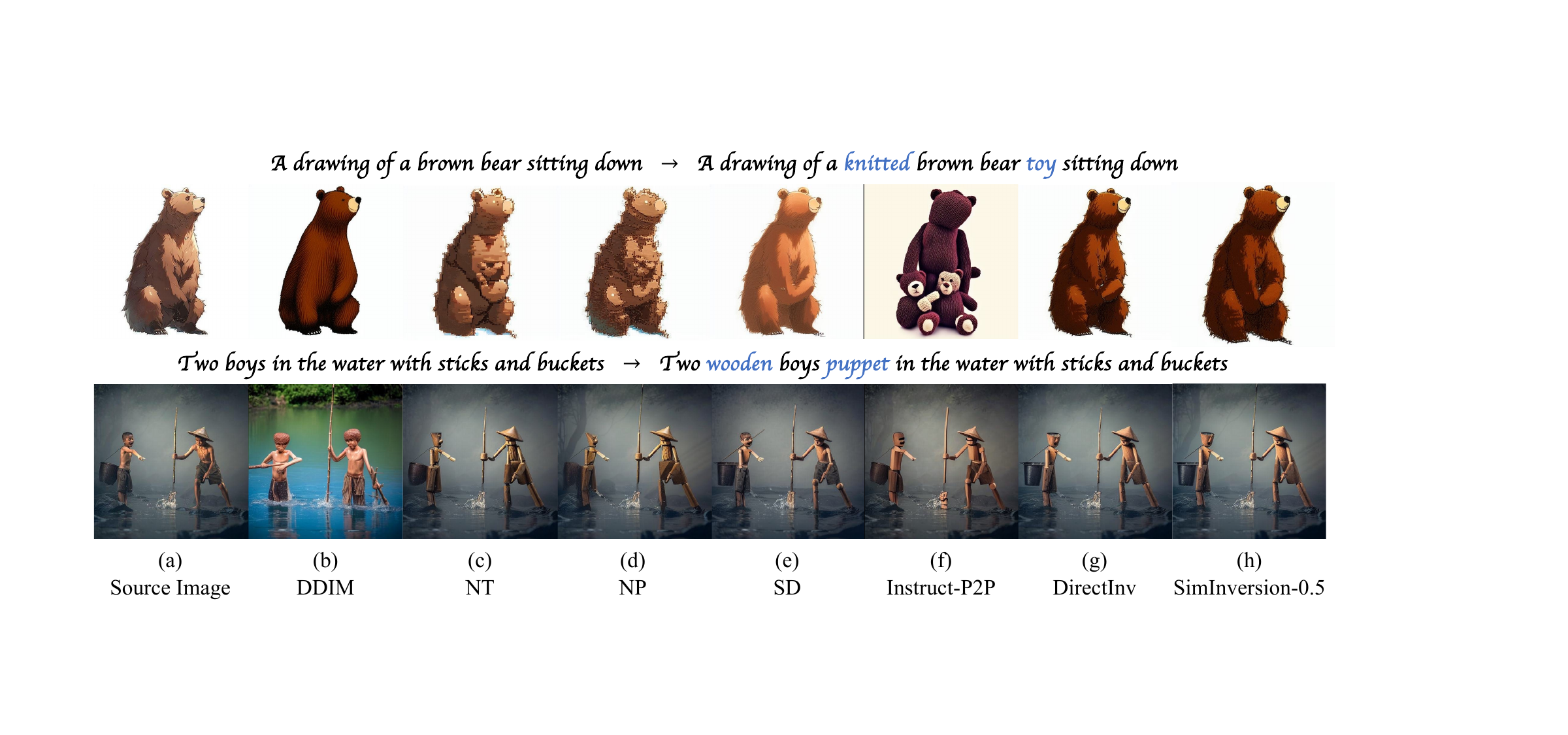}
\captionof{figure}{Illustration of image editing for changing material.}\label{fig:type7}
\end{minipage}

\begin{minipage}{\textwidth}
\centering
\includegraphics[height = 2in]{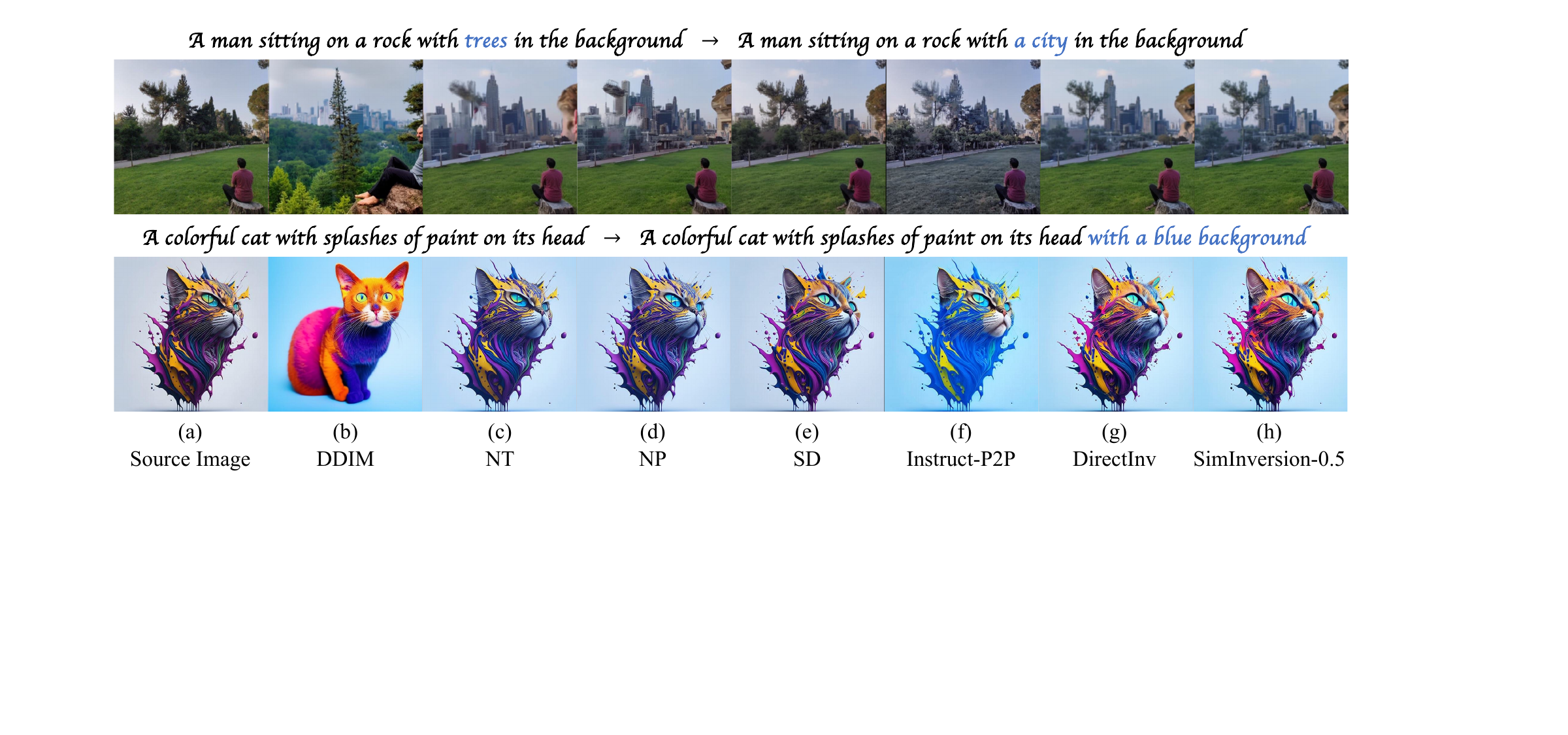}
\captionof{figure}{Illustration of image editing for changing background.}\label{fig:type8}
\end{minipage}

\begin{minipage}{\textwidth}
\centering
\includegraphics[height = 2in]{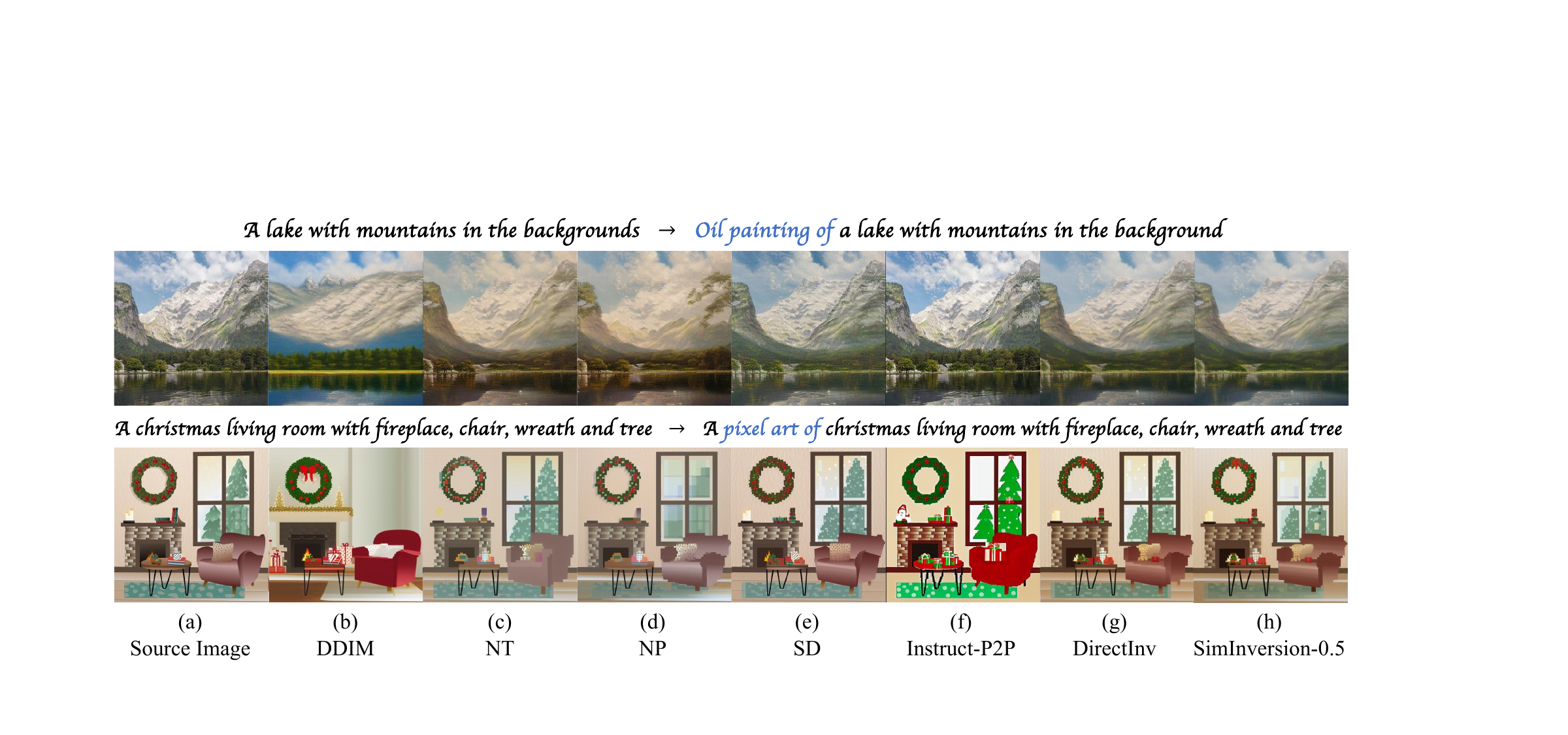}
\captionof{figure}{Illustration of image editing for changing style.}\label{fig:type9}
\end{minipage}
\end{figure}



\end{document}